\newtheorem{assumption}{Assumption}
\newcommandx{\info}[2][1=]{\todo[linecolor=OliveGreen,backgroundcolor=OliveGreen!25,bordercolor=OliveGreen,#1]{#2}}
\newtheorem{theorem}{Theorem}
\newtheorem{lemma}{Lemma}
\newcommand{\xmark}{\ding{55}}
\DeclareMathOperator{\argmax}{argmax}
\title{SMYRF: \\ Efficient Attention using Asymmetric Clustering}
\newcommand{\citeglue}{\cite{GLUE, sst, stsb, qqp, wnli, rte1, rte2, rte3, rte4, mrpc, cola, squad, N18-1101} }
\newcommand{\settitle}{\@maketitle}
\author{%
  Giannis Daras\\
  Computer Science Department \\
    The University of Texas at Austin\\
  \texttt{giannisdaras@utexas.edu} \\
  \And
  Augustus Odena \\
  Google Research \\
  \texttt{augustusodena@google.com}
  \And
  Nikita Kitaev \\
  Google Research \\
  \texttt{kitaev@cs.berkeley.edu} 
  \And 
  Alexandros G. Dimakis \\
  ECE Department \\
  The University of Texas at Austin \\
  \texttt{dimakis@austin.utexas.edu} \\
}
\begin{document}
\maketitle

\begin{abstract}
We propose a novel type of balanced clustering algorithm to approximate attention. Attention complexity is reduced from $O(N^2)$ to $O(N \log N)$, where $N$ is the sequence length. Our algorithm, SMYRF, uses Locality Sensitive Hashing (LSH) in a novel way by defining new Asymmetric transformations and an adaptive scheme that produces balanced clusters. The biggest advantage of SMYRF is that it can be used as a drop-in replacement for dense attention layers \textit{without any retraining}.
On the contrary, prior fast attention methods impose constraints (e.g. queries and keys share the same vector representations) and require re-training from scratch. 
We apply our method to pre-trained state-of-the-art Natural Language Processing and Computer Vision models and we report significant memory and speed benefits. Notably, SMYRF-BERT outperforms (slightly) BERT on GLUE, while using $50\%$ less memory. We also show that SMYRF can be used interchangeably with dense attention before and after training. Finally, we use SMYRF to train GANs with attention in high resolutions. 
Using a single TPU, we were able to scale attention to 128x128=16k and 256x256=65k tokens on BigGAN on CelebA-HQ. 
%

\end{abstract}

\section{Introduction}

Attention layers enable long-range representation learning 
and are becoming indispensable in architectures for both Image Synthesis~\cite{biggan, sagan, daras2019local} and Natural Language Processing~\cite{albert, yang2019xlnet, devlin2018bert, dai2019transformer, t5, liu2019roberta}.
Attention finds further uses in 
other domains like symbolic mathematics and music modeling as 
well ~\cite{MATHTRANSFORMER, musictransformer, child2019generating}.
Unfortunately, attention layers have high computational 
and memory cost which scales quadratically in the size of the input sequence.
This constraint is so onerous that the canonical implementation of
attention for image synthesis - Self-Attention GAN~\cite{sagan} - 
could only afford to use one self-attention layer.
For NLP, modern transformer-based models can only be trained 
in large industry research labs with massive infrastructure investments.
For instance, the recently published GPT-3~\cite{GPT3} model uses $96$ attention layers trained on input sequences of $2048$ tokens.
When fine-tuning pre-trained attention models, 
NLP researchers usually truncate input sentences, 
limiting performance on datasets with longer inputs. 

Recent research~\cite{show_attend_and_tell, daras2019local} 
indicates that dense attention is statistically
and computationally inefficient~\cite{Voita_2019, michel2019sixteen, daras2019local}:
it does not account for the locality inherent in many tasks.
Alternatives have been proposed that are either more
efficient~\cite{child2019generating, adaptively_sparse_transformers, reformer, routing_transformer, sinkhorn, dai2019transformer, lample2019large, star_transformer} 
or that better accommodate locality ~\cite{localattn, daras2019local}.
Most such alternatives have been sparse.
Sparsity can be achieved by limiting attention to pre-defined positions~\cite{localattn, daras2019local, star_transformer, child2019generating}. 
Recent work~\cite{adaptively_sparse_transformers, reformer, routing_transformer, sinkhorn} proposes data-driven sparsity, 
which allows for discovery of arbitrarily complex dependencies 
between input positions. 

Despite this progress, new state-of-the-art 
models~\cite{t5, GPT3, liu2019roberta, clark2020electra, GLUE, superglue}
still use the original dense attention layers.
There are three reasons for this:
(i) alternative fast-attention mechanisms degrade the
performance of the underlying model. 
For example, replacing dense attention layers in Transformers with memory efficient local attention~\cite{localattn} increases perplexity from $41.57$
to $44.23$ \cite{sinkhorn}.
(ii) some mechanisms work well, but make very strict assumptions.
For example, in Star Transformer~\cite{star_transformer} all nodes attend to
a relay node which summarizes the content of the entire input sequence,
but this prevents the use of causal masking, so it can only be used for
encoding.
(iii) some alternatives are only efficient in theory. 
For example, in some variants 
\cite{adaptively_sparse_transformers, Malaviya_2018} sparsification of the
attention map happens after instantiating the matrix, and so quadratic
memory is still used before instantiation. Finally,  
~\cite{child2019generating, beltagy2020longformer} require highly specialized GPU-kernels
and which prevents usage in several hardware settings (e.g. TPUs). The design of fast and efficient attention layers remains a challenge. 

\noindent \textbf{Our Contributions:} \\ 
\textbf{1)} We propose a novel type of balanced clustering to approximate attention. We call the underlying optimization problem Attention Biclustering and prove that finding an exact solution is computationally intractable.\\
\textbf{2)} We propose an algorithm for solving Attention Biclustering efficiently in practice. Our algorithm, SMYRF, uses Locality Sensitive Hashing (LSH) in a novel way by defining new Asymmetric transformations and an adaptive scheme that produces balanced clusters. \\
\textbf{3)} Our method, SMYRF, can handle different query and key vectors, just like normal dense attention.
As a result, SMYRF layers are drop-in replacements for pre-trained models, unlike previously proposed fast-attention mechanisms
such as Sinkhorn~\cite{sinkhorn}, Reformer~\cite{reformer} and Routing Transformer~\cite{routing_transformer}.\\ 
\textbf{4)}
We show through numerous experiments that 
SMYRF attention layers are very effective in terms of performance, memory and speed, even without any training. We measure the memory-performance trade-off of applying SMYRF to state-of-the-art NLP and Computer Vision models, across more than a dozen tasks. For example, we are able to shrink 
the memory requirements 
of a pre-trained BigGAN~\cite{biggan} by $50\%$ while maintaining $98.2\%$ of its Inception score without re-training. \\
\textbf{5)} We finetune SMYRF on GLUE~\cite{GLUE} starting from a BERT (base) checkpoint. We demonstrate that SMYRF-BERT outperforms BERT while using $50\%$ less memory. We also show that with $75\%$ less memory, SMYRF maintains $99\%$ of BERT performance on GLUE. Due to SMYRF's portability, we are also able to conduct experiments for various memory configurations with pre-trained BERT and RoBERTa~\cite{liu2019roberta} models on IMDB. We show slight performance drops for great memory benefits. \\
\textbf{6)} We show that SMYRF can be interchanged with dense layers \textit{before} and \textit{after} training. We report performance gains by using SMYRF in a back-and-forth manner: we replace dense with SMYRF during training (to earn in memory) and we replace SMYRF with dense attention during inference (to earn in performance). The interchangeability of SMYRF with dense attention is unique, as it has not been observed in previously proposed attention alternatives~\cite{reformer, routing_transformer, sinkhorn, beltagy2020longformer, daras2019local}. \\
\textbf{7)} 
We are able to scale the resolution of attention for GANs, due to our reduced memory footprint. We train a BigGAN with an $128 \times 128$ SMYRF attention layer and show it outperforms the dense attention performance, decreasing FID from $26.06$ to $25.03$ in Celeba-HQ-128~\cite{celeba}. Finally, we successfully train a BigGAN with attention at resolution $256\times 256$ on a single v3-8 TPU. \\ 
\textbf{8)} We open-source our code and pre-trained models to encourage more related research: \href{https://github.com/giannisdaras/smyrf}{https://github.com/giannisdaras/smyrf}. 
\section{Background}

Attention~\cite{vaswani2017attention} works by computing inner products of query and key vectors.  Depending on the application, these vectors may represent embeddings for tokens or image pixels. Input of each attention layer is three sets: $\mathcal Q, \mathcal K, 
\mathcal V$ for query, key and value vectors respectively.
Attention of $q$ to the keys set $\mathcal K$ outputs a new vector $o_q$ , which is a weighted sum of value vectors $v_i \in \mathcal V$ where each weight $w_i$ increases with the inner product $q \cdot k_i$. Specifically, the output is computed as:
\begin{equation}
o_q = \sum_{i=1}^{N} w_i v_i, \qquad w_i = \frac{e^{q \cdot k_i}}{\sum_{j=1}^{N} e^{q \cdot k_j}}.
\label{dense}
\end{equation}
Here, we assumed for notational simplicity that $N = |\mathcal Q| = |\mathcal K|$. Using matrix notation, attention is equivalently defined as $\sigma(Q \cdot K^T) \cdot V$ where $Q, K, V$ are matrices with rows the embeddings for each query, key, value and the function  $\sigma(.)$ computes the row-wise softmax.

\section{Approximating Attention with Clustering}
\subsection{Motivation}

Our method is motivated by the observation that attention matrices have interesting structure in real datasets.
Naively, to compute dense attention, as  equation \ref{dense} shows, we need to compute all outputs $o_{q_i}$, i.e. $O(|\mathcal Q| \cdot |\mathcal K|)$, a quadratic number of inner products $q_i \cdot k_j, \ q_i \in \mathcal Q, \ k_j \in \mathcal K$. However, we observe that in most real networks, the attention weights $w_i$ are sparse, because of the softmax operation and the structure of the vectors. 
For example we observe that in a pre-trained BigGAN on ImageNet, on average $\bm {98.11 \pm 0.26 \%}$\footnote{The reported numbers are calculated by inspecting the attention maps of 1000 random generated images.} of keys get weight less than $0.01$ in softmax and $\bm {86.11 \pm 2.92 \%}$ of them get less than $\frac{1}{|\mathcal K|}$, where $\mathcal K$ is the number of keys.

Further, we observe that the attention matrix is near low-rank, even after the softmax. 
By definition, the matrix $Q \cdot K^T$ is going to be of rank at most the dimension of the query and key vectors. Therefore, if the embeddings dimension is smaller than the input sequence, the attention matrix is low-rank. This is more pronounced for images and long-context language models. 
However, one can easily construct cases of low-rank matrices which become full rank after softmax. Our finding is that this does not happen in practice. In the Appendix we show that \textit{real attention matrices of pretrained models have a sharp decay in their singular values and hence can be well approximated by low-rank matrices}. 

SMYRF benefits from sparsity and low-rank structure of attention matrices.
By clustering keys and queries into groups, we obtain block-diagonal structure in the approximate attention matrix, since only query-key pairs within the same cluster are computed. We show that this method leads to accurate approximations of dense attention and it can be computed much faster and with much less memory.

\subsection{Problem Formulation}
We formulate the assignment of keys and queries into clusters as an optimization problem. 
Denote with $P_{ij} = q_i^T k_j$ the element $(i, j)$ of the product matrix $P = Q \cdot K^T$ and the attention map with $M = \sigma(Q \cdot K^T)$. 
We will assign query and key vectors into $L$ clusters $c_1, c_2, ..., c_L$ and compute attention only within each cluster. 
For fast execution on TPUs/GPUs, all partial attentions should be computed in parallel. 
For this reason, we require that clusters are balanced: i.e. all clusters contain the same number of keys and queries. We note that the number of keys in each cluster does not have to be equal to the number of queries. 
Formally, each cluster contains  $\frac{|\mathcal Q|}{L}$ queries and $\frac{|\mathcal K|}{L}$ keys. 

We denote with $\mathcal C^L$ the set of all possible assignments in $L$ balanced non-overlapping clusters. A specific assignment is denoted by $\mathcal C^L_t$ 
and there are $T$ possible such assignments, where $T$ is exponentially large in the number of keys and queries. 
$$
\mathcal C^L = \{\mathcal C_1^L,  C_2^L, ... \mathcal C_T^L\}.
$$
\begin{equation}
\mathcal C_t^L=\{c_1, c_2, ..., c_L\}: \quad   
\begin{cases}
 c_i = \{q_1, ..., q_{\frac{|\mathcal Q|}{L}}, k_1, ...,  k_\frac{|\mathcal K|}{L}\}, \quad c_i \subseteq \mathcal Q \cup \mathcal K, \ \ \forall i\in \{1, ..., L\} \\ 
c_x \cap c_y = \varnothing \quad \forall c_x, c_y \in \mathcal C_t^L.
\end{cases}
\label{balanced}
\end{equation}

We emphasize that every key and query is assigned in a unique cluster for any valid assignment 
$\mathcal C^L_t$:  
$c_x \cap c_y = \varnothing \quad \forall c_x, c_y \in \mathcal C_t^L.$ We also define a masking operator $\textrm{Mask}_\epsilon$ that takes as input: (i) a clustering $\mathcal C_t^L \in \mathcal C^L$ and (ii) the product matrix $P$ and replaces $(q, k)$ pairs that are not in the same cluster with $-a$, where $a \in \mathbb R^{+}$ is a constant chosen to satisfy $e^{-a} = \epsilon$ for a given $\epsilon \geq 0$. Formally:
$$
\textrm{Mask}_\epsilon(\mathcal C_t^L, P_{ij}) = \begin{cases}
P_{ij} \quad \textrm{iff } \exists t: (i, j) \in c_t, \\
-a, \quad \textrm{o/w}.
\end{cases}
$$

Intuitively, the masking operator replaces inner products of queries and keys that are not in the same cluster with an arbitrarily small number, so that the softmax will assign a score arbitrarily close to zero to these entries.
We denote with $\hat P_\epsilon=\textrm{Mask}_\epsilon(\mathcal C_t^L, P)$ the product matrix after the masking. 
With this notation, $\hat P_0 = \textrm{Mask}_{0}( \mathcal C_t^L, P)$, is the product matrix for the within-clusters attention. 

\noindent \textbf{Attention Biclustering:}
Under this formulation, we are searching for the cluster assignment $\mathcal{C}^L_t$ that approximates the dense attention matrix $\sigma(P)$ as well as possible, in Frobenius norm:
\begin{equation}
\min_{\mathcal C_t^L \in \mathcal C^L} || \sigma(\hat P_0)- \sigma(P)||_F.
\label{min_problem}
\end{equation}
Note that $L$ must divide the number of queries and keys for this problem to be well-defined.


\subsection{Complexity of Attention Biclustering}
We start by showing that Attention Biclustering, the optimization problem defined in (\ref{min_problem}), is provably computationally intractable.

\begin{theorem}
Attention Biclustering (\ref{min_problem}) is NP-hard.
\label{main_theorem}
\end{theorem}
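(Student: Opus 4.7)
My plan is to prove Theorem 1 by a polynomial-time reduction from a known NP-hard balanced bipartite partition problem (for instance, balanced bipartite $L$-way graph partitioning, which generalizes balanced MIN-BISECTION and is NP-hard already for $L=2$). The fit is natural: Attention Biclustering is exactly a balanced clustering problem over the bipartite set $\mathcal Q\cup\mathcal K$ whose cost is a function of the pairwise scores $q_i^Tk_j$, so if we can make these scores encode arbitrary bipartite edge weights, the combinatorial hardness of balanced graph cuts will transfer.

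Concretely, starting from a bipartite graph $G=(U\cup V,E)$ with $|U|=|V|=n$ and a parameter $L$ dividing $n$, I would set $\mathcal Q=U$, $\mathcal K=V$, and build embeddings in some $\mathbb R^d$ with $d=O(n)$ so that $q_i^T k_j=M$ when $\{i,j\}\in E$ and $q_i^T k_j=0$ otherwise, for a scalar $M>0$ chosen below. A concrete construction is to take any rank decomposition of the bipartite adjacency matrix $A$ (e.g., $A=UV^T$ via its SVD) and scale by $\sqrt M$, which is polynomial in $n$. Any clustering $\mathcal C^L_t$ of $\mathcal Q\cup\mathcal K$ directly induces a balanced $L$-way partition of the vertex set, and conversely any such partition of $G$ is a feasible $\mathcal C^L_t$.

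The technical heart is matching the Frobenius objective of (\ref{min_problem}) to the cut objective of the source problem. By taking $M$ polynomially large in $n$, each row of $\sigma(P)$ concentrates almost all of its mass on the entries corresponding to edges incident to that vertex, up to $O(e^{-M})$ corrections. To decouple the softmax normalization across rows, I would first pad the instance so that every vertex has the same edge degree $\Delta$ (a standard degree-regularization gadget that preserves hardness of balanced partition); the per-row softmax then becomes a uniform distribution over $\Delta$ entries, and $\sigma(\hat P_0)$ differs from $\sigma(P)$ only on edges whose endpoints sit in different clusters. The Frobenius error becomes an affine, strictly increasing function of the number of cut edges (plus vanishing correction terms), so the optimizer of (\ref{min_problem}) exactly solves balanced bipartite partition, establishing NP-hardness of the decision version of Attention Biclustering.

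The step I expect to be the main obstacle is controlling the softmax nonlinearity cleanly: because softmax normalizes row-wise, even a single within-cluster heavy entry that gets cut redistributes its mass across the remaining within-cluster keys, producing a non-local contribution to the Frobenius norm that is not literally a cut count. The degree-regularization described above is what makes these contributions row-independent and identical per cut edge; if that construction turns out to be too rigid, a natural fallback is to reduce from a source that is already hard on regular or gadgeted instances (e.g., 3-PARTITION embedded in a regular bipartite graph, or the balanced complete bipartite subgraph problem), where structural uniformity across rows is built in from the start and the softmax-to-cut correspondence becomes essentially automatic.
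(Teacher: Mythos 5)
Your reduction has two genuine gaps. First, the source problem you invoke --- balanced $L$-way partition of a bipartite vertex set under the \emph{doubly} balanced constraint that every cluster contain exactly $n/L$ vertices from each side --- is not an off-the-shelf NP-hard problem; it is not the same as MIN-BISECTION, and its hardness is essentially what has to be proved. The paper devotes its Lemma (the ``max-mass'' problem) to establishing exactly this, via a reduction from three-dimensional matching, and the point is delicate: as the paper notes, the variants with one query per cluster (bipartite matching) or one query and $m$ keys per cluster (generalized matching) are polynomial-time solvable, so hardness only appears once clusters must contain at least two queries and two keys. You cannot simply cite balanced graph partitioning and move on.

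Second, and more seriously, the claim that after degree regularization the Frobenius error is ``an affine, strictly increasing function of the number of cut edges'' is false. Fix a query $q$ with $\Delta$ heavy neighbors, of which $d_q\ge 1$ survive in its cluster and $c_q=\Delta-d_q$ are cut. Up to $O(e^{-M})$ corrections, row $q$ contributes $c_q/\Delta^2 + c_q^2/\bigl(d_q\Delta^2\bigr)$ to $\|\sigma(\hat P_0)-\sigma(P)\|_F^2$ (and a qualitatively different amount when $d_q=0$). This is strictly convex in $c_q$, so the total error depends on how the cut edges are distributed over rows, not only on their number; two clusterings with equal cut can have different objective values, and a clustering with a larger total cut can even have a smaller objective. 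Degree regularization equalizes the dense denominators but not the within-cluster denominators, which is where the nonlinearity lives. The paper circumvents this with a different device: it augments the key set with one private key per query carrying a huge inner product $a$, which pins every cluster denominator to within a factor $(1-\epsilon)$ of the dense one uniformly over clusterings, and then shows that for $a$ large enough the post-softmax objective has the same argmax as $\max\sum_{(q,k)\in\mathcal C_t^L}e^{2q\cdot k}$, which is finally re-embedded into the pre-softmax max-mass problem. To salvage your route you would need either such a denominator-pinning gadget or a source instance whose yes-answers correspond to cut exactly zero, where the distribution issue disappears.
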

We defer the proof of this theorem to the Appendix. 
Our proof proceeds by first establishing hardness before the softmax, using a reduction from three dimensional matching~\cite{gary}. 
We then leverage this to establish hardness of approximating attention through clustering after the softmax operation. 

We consider it interesting to establish the computational intractability of
Attention Biclustering, since this clustering formulation is quite unique due to the softmax operation. Our hardness result rules out an exact polynomial solution, unless P=NP. 
We propose an efficient algorithm that leverages hashing to assign queries and keys to clusters. 
Formally proving an approximation guarantee or provable inapproximability for the attention approximation problem we proposed remains open. 

\subsection{Proposed algorithm: SMYRF}

Our algorithm consists of the following steps:\\
\textbf{1)} We first propose novel asymmetric transformations $F, G: \mathbb R^{d} \to \mathbb R^{d'}$ such that for all given queries $q_1, q_2 \in \mathcal Q$ and keys $k \in \mathcal K$: $q_1 \cdot k \leq q_2 \cdot k \iff ||F(q_1) - G(k)||_2 \leq ||F(q_2) - G(k)||_2$. \\
\textbf{2)}
We then use a Locality Sensitive Hashing (LSH) function $h:\mathbb R^{d'}\to \mathbb R$ to map transformed vectors in real numbers, so that that vectors that are close in Euclidean distance correspond to numbers that are close on the real line. \\
\textbf{3)} 
We sort vectors based on their LSH value and group them
by adapting the thresholds to ensure $L$ balanced clusters. \\
\textbf{4)}
We perform dense attention within each cluster.

Our approximate attention algorithm relies on a few technical innovations:

\noindent \textbf{Novel Asymmetric Transformations:}
We need an efficient way to find, for any given query vector $q_i \in \mathcal Q$ the set of keys with which it has big inner products. 
This problem, called Maximum Inner Product Search (MIPS), can be efficiently solved  by transforming query and key vectors to convert it to a Nearest Neighbor Search (NNS) as proposed in the pioneering Asymmetric LSH (Locality Sensitive Hashing) work by Shrivastava et al.~\cite{l2lsh}.


We are looking for functions $F: \mathbb R^{d}\to \mathbb R^{d'}, G: \mathbb R^d \to \mathbb R^{d'}$ such as: $||F(q) - G(k)||_2^2 = D(q\cdot k), \ \forall (q, k)$ where $D:\mathbb R \to \mathbb R$ a decreasing function that depends only on the inner product $q\cdot k$. We constrain our focus on functions $D$ that decrease linearly with the inner product $q\cdot k$. Several previous works have proposed Asymmetric LSH transformations~\cite{l2lsh,xbox,h2lsh} but focus on the case where we have a \textit{single query} $q$ and multiple keys. In that case, any norm $||q||_a$ where $a=\{1, ..., \infty\}$ is constant and thus $D = D(q\cdot k, ||q||_a)$.  

Our central algorithmic contribution is the proposal of novel asymmetric functions:
\begin{equation}
    F(q_i) = \left[q_i; 0; \sqrt{M_Q^2 + M_K^2 - ||q_i||_2^2} \right], \qquad G(k_i) = \left[k_i; \sqrt{M_Q^2 + M_K^2 - ||k_i||_2^2}; 0\right]
\end{equation}
where we use the constants $M_Q = \max_{q_i}||q_i||_2, \quad M_K = \max_{k_i}||k_i||_2$, or any other upper bound on the norms. 
With this transformation, all queries and keys are mapped to a $(d+2)$-dimensional ball with radius $\sqrt{M_Q^2 + M_K^2}$ and the distance of the transformed vectors decreases linearly with the inner product of the original vectors:
\begin{equation}
    ||F(q_i) - G(k_i)||_2^2 = 2 \cdot \left( M_Q^2 + M_K^2 -  q_i\cdot k_i\right).
\end{equation}
Note that the Euclidean distance of the transformed vectors depends only on the inner product of the original vectors and not on individual norms $||q_i||_2$ as in previous work~\cite{e2lsh, h2lsh, xbox}. We include details of comparison to the numerous prior asymmetric transformations in the Appendix.

\noindent \textbf{Adaptive Clustering:}
The final step of SMYRF is to use the hashed values to create \textit{balanced} clusters. These are created by forming balanced hash buckets where every group is assigned the same number of query and key vectors. We modify the E2LSH~\cite{e2lsh} hashes to create balanced clusters as follows: 
Instead of rounding the E2LSH to an integer value as in~\cite{e2lsh}, we adaptively set the boundaries of the $1$-d hashed space to ensure the same number of query and key vectors per interval. Computationally wise, this only requires sorting the hashes. We explain the mathematical details of our adaptive clustering scheme and the differences with E2LSH in the Appendix.

\noindent \textbf{Computational Complexity and speedups:} For notational simplicity we assume $|\mathcal Q| = |\mathcal K| = N$.
The total time and memory complexity of SMYRF is $O\left(H \cdot N \cdot \log N + H \cdot \frac{N^2}{L}\right)$, where: $H$ denotes hashing rounds, $N$ number of query/key vectors and $L$ number of clusters. For most of our experiments we choose $L = O(N), \ H=O(1)$, and thus complexity is $O(N\log N)$. Even though we obtain optimal complexity for $L=O(N), \ H=O(1)$, both $L, H$ are parameters that can be tuned to satisfy the desired memory-performance trade-off. Regarding speed, SMYRF accelerates 
a lot attention as sequence length increases. For example, for sequence length 2048, SMYRF-BERT offers $\approx 20\%$ speedup, while for $4096$ speedup increases to $\approx 50\%$. We include detailed speed plots for applying SMYRF to BERT in the Appendix.

\section{Experiments}

\subsection{Pre-trained models}
We first illustrate that SMYRF is an excellent drop-in replacement for pre-trained dense attention. We show significant memory benefits for relatively small performance drop, \textit{with no training at all}. 
We use a pre-trained\footnote{Since BigGAN's official checkpoints are not publicly available, we use the authors' open-source, PyTorch~\cite{paszke2019pytorch} pre-trained models: https://github.com/ajbrock/BigGAN-PyTorch
} BigGAN, which is a state-of-the-art model in Image Generation for ImageNet~\cite{ImageNet}. BigGAN has a single attention layer at resolution $64\times64$ (4096 queries). We replace BigGAN's dense attention with a SMYRF layer at the same resolution, with no other modifications. Figure \ref{image_quality} illustrates images generated by SMYRF-BigGAN for different memory savings, ranging from $99.44\%$ (first column) to $50\%$ (one to last column). Last column shows generated images using the dense attention layer ($100\%$ memory). As shown, SMYRF 
enables a new tradeoff in the design space. 
We can drastically reduce attention memory by $93.75\%$ with a small degradation or select any other point in this tradeoff depending on hardware specifications. 
We report a few Inception~\cite{inception_score} and FID~\cite{FID} scores for different memory savings in Table \ref{biggan_pretrained}. We emphasize that no further modification was made to this model other than replacing the attention layer. By shrinking $50\%$ the memory requirements of attention, SMYRF maintains $98.2\%$ of Inception performance without any training. In the Appendix, we also include visualizations of clustering assignments in real-world images.

\begin{figure}[!htp]
\begin{minipage}{0.69\textwidth}
    \includegraphics[width=8cm]{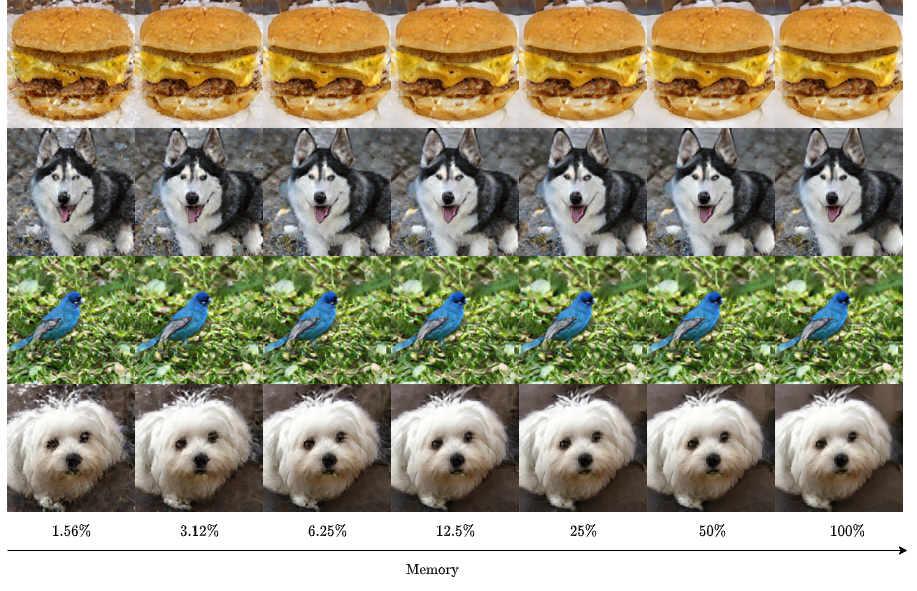}
    \captionof{figure}{\small Images generated by SMYRF-BigGAN. \\
    The model is initialized with the weights of a pre-trained BigGAN \\ on ImageNet (no further training). \\ We show images for  memory reduction ranging from $98.44\%$ \\ (first column) to $50\%$ (one to last column). \\ Last column shows generated images by BigGAN with dense attention.}
    \label{image_quality}
\end{minipage}
\begin{minipage}{0.30\textwidth}
    \includegraphics[width=5cm]{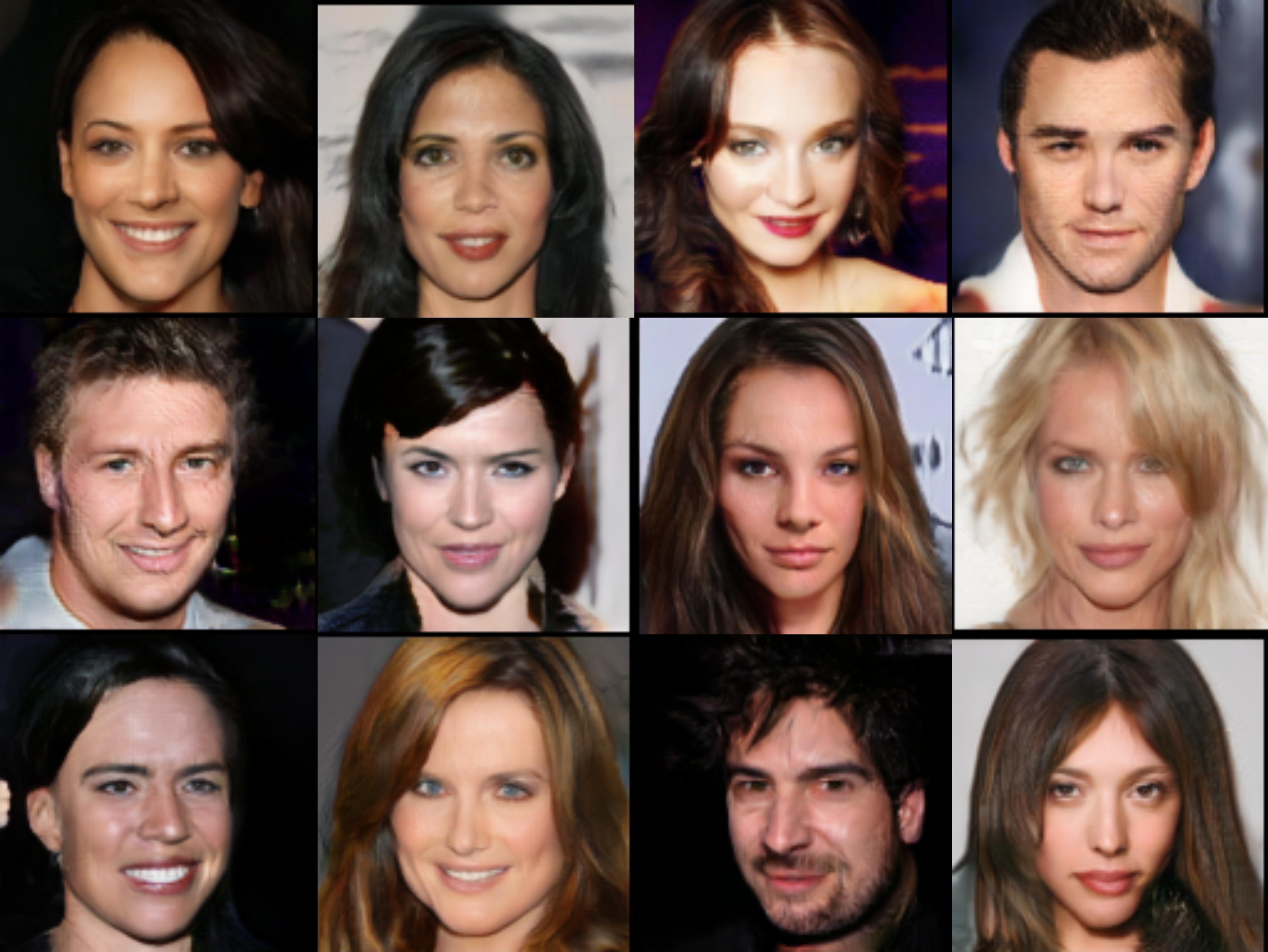}
    \caption{\small Generated images from SMYRF-BigGAN on Celeba-HQ-128.  Attention at $128\times128$. The trained model uses $50\%$ less memory compared to original BigGAN.}
    \label{gen128}
\end{minipage}
\end{figure}

\begin{table}[!htp]
\begin{center}
\begin{tabular}{l|l|l|l|l|l}
     & Memory & Rounds & $C$ & Inception & FID \\ \hline
    
    BigGAN & \multirow{1}{4em}{$100\%$} & 1 & 4096 & $\bm{93.79 \pm 1.96}$ & 11.30 \\ \hline
    
    \multirow{9}{4em}{SMYRF-BigGAN} & \multirow{3}{4em}{50\%} & 
    32 & 64 & $91.91 \pm 2.65$ & $12.18$ \\
    & & 64 & 32 & $\bm{92.09 \pm 1.83}$ & $12.18$ \\
    & & 128 & 16 & $91.59 \pm 1.83$ & $\bm{12.10}$ \\ 
    
    \cline{2-6}
    & 
    \multirow{3}{4em}{$25\%$} &
    32 & 32 & $87.90 \pm 1.90$ & $13.34$ \\
    & & 64 & 16 & $88.45 \pm 1.70 $& $13.45$ \\
    & & 128 & 8 & $\bm{89.61 \pm 1.63}$ & $\bm{13.19}$ \\
    \cline{2-6}
    &  
    \multirow{3}{4em}{$12.5\%$} & 
    32 & 16 & $81.67 \pm 1.97$  & 16.08 \\
    & & 64 & 8 & $\bm{82.87 \pm 1.82}$ & $\bm{16.00}$ \\
    & & 128 & 4 & $82.10 \pm 2.06$ & $16.03$ \\ 
     \bottomrule
\end{tabular}
\end{center}
\caption{Effect of SMYRF attention approximation on a pre-trained BigGAN (with no training). Rounds denote the number of LSH hashes and $C$ the number of queries per cluster.}
\label{biggan_pretrained}
\end{table}

\subsection{Finetuning pre-trained models}

In this section, we \textit{finetune} pre-trained models with SMYRF. We show that finetuned SMYRF models, with $50\%$ memory reduction, can outperform dense attention. We also show that even with more aggressive memory-shrinking, up to $97\%$, SMYRF maintains a relatively good performance.

We train SMYRF-BERT (base) on GLUE \citeglue 
benchmark, using sequence length 128. 
We compare the following five models: (i) BERT~\cite{devlin2018bert} (base), (ii) SMYRF-BERT (base) with $50\%$ memory reduction (2nd row), (iii) SMYRF-BERT (base) with $25\%$ memory reduction (3rd row), (iv) BERT (base) with input sequences truncated to $64$ tokens ($50\%$ memory reduction, 4th row), (v) BERT (base) with input sequences truncated to $32$ tokens ($25\%$ memory reduction, 5th row). We summarize results on Table \ref{glue_results}. Remarkably, SMYRF-BERT (slightly) \textbf{outperforms} original dense attention, while using $50\%$ less memory. We also underline that SMYRF-BERT with $25\%$ of original memory, maintains $\approx \bm{99\%}$ of original model performance, while the BERT-model that uses the same memory (last row) maintains only $\approx 89\%$. 

To demonstrate that SMYRF scales for larger models, we also run experimetns with \textbf{SMYRF-BERT large} to a subset of the GLUE tasks. Specifically, SMYRF-BERT large obtains $\mathbf{60.4\%}$ performance (Matthew's Correlation) in the CoLA task and $\mathbf{90.2\%}$ (accuracy) in the QQP task. Both scores are significantly improved compared to the scores of the SMYRF-BERT base model, which shows that the approach scales to models with more attention layers. The corresponding scores of BERT large are $60.5\%$ and $89.3\%$ which are on par with the SMYRF performance for that model.

Since GLUE~\cite{GLUE} datasets contain mostly short inputs, we also experiment on the IMDB~\cite{IMDB} dataset, using sequence length $512$ tokens\footnote{Note that for fair comparison with dense attention, we train SMYRF layers and dense layers on the same sequence length, following the comparison scheme of Reformer~\cite{reformer}. As previous work has shown~\cite{beltagy2020longformer}, training on IMDB  (and other long-input datasets) with bigger sequence length can help performance.}. We experiment with SMYRF-BERT (base) and we report results for various configurations (memory savings, hashing rounds and cluster size). To support our argument that our algorithm is a drop-in replacement to \textit{any} dense attention layer, we also include some results for RoBERTa~\cite{liu2019roberta} (base).
Results are summarized in Table \ref{nlp_finetuning}. Notably, 
SMYRF-BERT maintains $97.2\%$ of dense attention performance for $87.5\%$ memory savings. 

In Table \ref{backandforth}, we provide results 
for a Back-and-Forth procedure: we finetune with 
SMYRF and then for inference we use dense attention. By doing that, we observe performance almost equivalent to training with dense attention, while saving computational resources with SMYRF training. This indicates interchangeability between SMYRF and dense attention, which has not been previously reported. We use it to
to train in a memory efficient manner and obtain maximum final performance.

\begin{table}[!htp]
    \begin{adjustbox}{width=\columnwidth, center}
    \begin{tabular}{l|l|l|l|l|l|l|l|l|l|l|l}
    & Avg. & $\#$ & $C$ & CoLA & MNLI-m/mm & MRPC & QNLI & QQP & RTE & SST-2 & STS-B \\ \hline
    
    BERT$_{128}$ & $82.69$ & 1 & 1 & 57.83 & $84.43/\bm{84.68}$ & $\bm{88.41}$ & $\bm{91.31}$ & 89.70 & 65.70 & $\bm{93.46}$ & $88.73$ \\ \hline
    \multirow{2}{4em}{SMYRF-BERT} & $\bm{83.12}$ & 2 & 32 & $58.79$ & \textbf{85.02}/84.27 & 87.69 & 91.14 & $\bm{89.72}$ & $\bm{68.59}$ & 93.23 & $\bm{89.65}$ \\ \cline{2-12}
    & $81.74$ & 2 & 16 & $\bm{58.90}$ & $82.86/83.49$ & $85.72$ & $89.53$ & $89.33$ & $64.98$ & $93.12$ & $87.75$ \\ \hline
    BERT$_{64}$ & $81.57$ & 1 & 64 & 58.80 & 82.34/82.47 & 87.02 & 90.48 & 89.69 & 61.73 & 93.00 & 88.64 \\ \hline
    BERT$_{32}$ & $73.56$ & 1 & 32 & $56.40$ & $64.51/63.41$ & $77.89$ & 79.81 & 88.59 & 55.23 & 92.66 & 83.53 \\ 
    \bottomrule
    \end{tabular} 
    \end{adjustbox}
    \caption{Results on GLUE~\cite{GLUE} (dev). $\#:$ hashing rounds. $C:$ the number of queries per cluster. SMYRF outperforms BERT while using $50\%$ less memory in each of the $12$ attention layers.}
    \label{glue_results}
\end{table}

\begin{table}[!htp]
\begin{center}
\begin{tabular}{l|l|l|l|l|l}
     & Dataset & Memory & Accuracy & Rounds & Cluster  \\ \hline
     BERT & \multirow{8}{4em}{IMDB} & $100\%$ & $\bm{94.12\%}$ & 1 & 512 \\\cline{1-1} \cline{3-6}
    \multirow{5}{4em}{SMYRF-BERT} & 
    &  
    \multirow{1}{4em}{50\%} & 
    $92.64\%$ & 8 & 32 \\ \cline{3-6}
    & & \multirow{1}{4em}{25\%} & 
    $92.52\%$ & 16 & 8 \\
    \cline{3-6}
    
    & & \multirow{1}{4em}{$12.5\%$} & 
    $91.46$ & 8 & 8 \\
    \cline{3-6}
    
    & & \multirow{1}{4em}{6.25\%} &  
    $88.78\%$ & $8$ & $4$  \\
    \cline{3-6} 
    & & \multirow{1}{4em}{3.125\%} & $87.49\%$ & 4 & 4 \\
    \cline{1-1} \cline{3-6}
    \multirow{1}{4em}{RoBERTa} & & $100\%$ & $\bm{94.96\%}$ & 1 & 512 \\
    \cline{1-1} \cline{3-6}
    
    SMYRF-RoBERTa & & $50\%$ & $93.72$ & 8 & 32 \\
    \bottomrule
\end{tabular}
\end{center}
\caption{Finetuning BERT~\cite{devlin2018bert} (base) and RoBERTa~\cite{liu2019roberta} (base) on IMDB dataset for various configurations. For SMYRF models, we \textbf{train} and \textbf{evaluate} with SMYRF.}
\label{nlp_finetuning}
\end{table}

\begin{table}[!htp]
    \centering
    \begin{tabular}{l|l|l|c|l}
    & Dataset & Memory & SMYRF Inference & Accuracy \\ \hline
    RoBERTa & \multirow{6}{4em}{IMDB} & $100\%$ & \xmark 
    & $\bm{94.96\%}$ \\ \cline{1-1} \cline{3-5}
    \multirow{2}{4em}{SMYRF-RoBERTa} & & \multirow{2}{4em}{$50\%$} & \xmark & $93.72\%$ \\ 
    & & & \checkmark & $\bm{94.62\%}$ \\ 
    \cline{1-1} \cline{3-5} 
    BERT & & $100\%$ & \xmark & $94.12\%$ \\ \cline{1-1} \cline{3-5}
    \multirow{2}{4em}{SMYRF-BERT} & & \multirow{2}{4em}{$50\%$} & \xmark & $92.64\%$ \\ 
    & & & \checkmark &  $\bm{93.54\%}$\\ 
    \bottomrule
    \end{tabular}
    \caption{Interchangeability of SMYRF and dense attention. We \textbf{train} with SMYRF and \textbf{evaluate} with \textit{dense attention} for lightweight training and maximum performance.}
    \label{backandforth}
\end{table}

\subsection{Training from scratch}
We also include experiments for networks trained from scratch.
This shows that a non-pretrained model can learn with randomly initialized, SMYRF layers.  Initially, the random weights produce less sparsity. However, the model quickly learns to create sparse attention maps and learning under our framework is possible. We use BigGAN~\cite{biggan} as the underlying model (see Appendix for details).
We conduct our experiments on Celeba-HQ~\cite{celeba}, which contains $30$K images of celebrities at resolution $1024\times 1024$. We choose Celeba-HQ because: (i) images are in resolution higher than $128\times 128$, (ii) our budget is limited and Celeba-HQ requires much less training steps compared to ImageNet~\cite{ImageNet}. With SMYRF, we move attention from $64\times 64$ resolution to $128\times128$ and train with $50\%$ less memory than dense attention. In Table \ref{SMYRF128res}, we report FID for BigGAN and SMYRF-BigGAN after $120$K steps training on Celeba-HQ-128 (downsampled to $128\times 128$). SMYRF-BigGAN \textit{outperforms} BigGAN's FID by $\bm{3.95\%}$. Generated images from our model are shown in Figure \ref{gen128}. We finally move the attention layer to resolution $256\times 256$ (65k length) and we successfully train on Celeba-HQ-256 for 120K steps on a single TPU v3-8. As far as we know, no other GAN has been trained with attention in higher resolution than this. Details and generated images are included in the Appendix.

\begin{table}[!htp]
\begin{center}
\begin{tabular}{l|l|l|l|l|l|l}
& Resolution & Attention & Memory & Rounds & $C$ & FID \\ \hline
BigGAN & \multirow{2}{4em}{$128\times 128$} & \multirow{1}{4em}{$64\times64$} & $100\%$ & $1$ & $4096$ & 26.06 \\ \cline{1-1} \cline{3-7}
\multirow{1}{8em}{SMYRF-BigGAN} & & $128\times 128$ & $50\%$ & 4 & 2048 & $\bm{25.03}$\\ \bottomrule
\end{tabular}
\caption{Results on BigGAN training on Celeba-HQ-128 for 120K steps. Moving attention from $64\times 64$ to $128 \times 128$ helps performance: FID decreases from $26.06$ to $\bm{25.03}$. Memory percentages in this Table have as reference the memory a dense attention layer would use at the given resolution.}
\label{SMYRF128res}
\end{center}
\end{table}

\subsection{Comparison with other efficient attention techniques}
\begin{table}[!htp]
\centering
\begin{tabular}{c|c}
Model & IMDB (3 epochs)\\ \hline
SMYRF-RoBERTa & \textbf{93.7\%}   \\ 
E2LSH & 89.3\% \\
Reformer & 88.7\% \\
\bottomrule
\end{tabular}
\caption{LSH ablation experiment. The E2LSH model corresponds to the SMYRF-RoBERTa model using the E2LSH~\cite{e2lsh} hashing scheme instead of the asymmetrical transformations. The Reformer model corresponds to running SMYRF-RoBERTa with the cross polytope LSH~\cite{nearoptimallsh} scheme, which is used in the Reformer~\cite{reformer} paper.}
\label{eff_attn_comp}
\end{table}

To validate the effectiveness of the proposed asymmetrical transformations, we replace SMYRF's hashing scheme with the E2LSH~\cite{e2lsh} scheme and the cross-polytope LSH~\cite{andoni2015practical} scheme of the Reformer and we evaluate all models on the IMDB~\cite{IMDB} dataset, after training for three epochs. The results are summarized in Table \ref{eff_attn_comp}. As shown, the asymmetrical transformations of SMYRF largely outperform all the other LSH schemes. This is expected since by design SMYRF tries to form clusters that maximize the inner products between queries and keys, while E2LSH and Reformer try to minimize euclidean distance and angular distance respectively, which is not the best objective when dealing with queries and keys with different vector representations and arbitrary norms. 

To compare with the Longformer~\cite{beltagy2020longformer}, we evaluate SMYRF on the Hyperpartisan News Detection~\cite{kiesel-etal-2019-semeval} dataset. For this task, Longformer reports $94.8\%$ accuracy with $4096$ context-length. SMYRF obtains $\textbf{97.2\%}$ performance while only using $512$ tokens. Longformer slightly outperforms (for $\approx 1\%$) SMYRF in the IMDB dataset but it uses $8$ times more tokens to achieve that. Unfortunately, the available RoBERTa~\cite{liu2019roberta} models have been trained with maximum positional embeddings at $512$ tokens and thus we cannot determine whether bigger sequence lengths would favor SMYRF. Nevertheless, SMYRF performs on par with other efficient attention techniques without requiring any pre-training.

\section{Related work}

The fact that attention maps of pre-trained layers are sparse is well-known~\cite{Voita_2019, michel2019sixteen, daras2019local, adaptively_sparse_transformers, Sukhbaatar_2019, Peters_2019}. Relevant research to our work includes efforts to leverage that sparsity by limiting attention of each element to a subset of the original sequence. ~\cite{localattn} proposes to limit attention to a sliding window around each element. Even though this simple idea is a strong baseline due to locality, this method is usually outperformed~\cite{sinkhorn, reformer, routing_transformer} by data-driven methods for assigning to each query the keys it will attend to. One recent research work that performs well with pre-defined sparsity is Longformer~\cite{beltagy2020longformer}. Longformer has been shown to perform well in downstream tasks after pre-training for $65$K gradient steps, resuming MLM training of a pre-trained RoBERTa~\cite{liu2019roberta} model. However, this work requires custom GPU kernels that do not transfer across hardware (i.e. are not efficient on TPUs). SMYRF differs from Longformer in other important aspects as well: (i) SMYRF does not require (even though it might help) further pre-training before finetuning on downstream tasks. Therefore, SMYRF is a drop-in replacement of dense attention, while Longformer~\cite{beltagy2020longformer} requires some adaptation of the original dense attention. (ii) More importantly, the fixed sparsification idea used in Longformer \cite{beltagy2020longformer} is fundamentally different from our idea of using clustering to approximate attention and (iii) SMYRF can be used interchangeably with dense attention while Longformer cannot. As we showed, a trained SMYRF attention lower can be converted back to a normal dense attention layer during inference.

There are three research works that are very relevant to ours since they also propose data-driven attention within each group: (i) the Reformer \cite{reformer}, (ii) the Sparse Sinkhorn Attention~\cite{sinkhorn} paper and (iii) the Routing Transformer~\cite{routing_transformer}. Reformer~\cite{reformer} changes the dense attention layer twofold: (i) it tights vector representations of queries and keys, (ii) it sets their norm to be equal to $1$. Reformer is the first paper to propose LSH for clustering queries and keys. In Reformer, instead of using Asymmetric LSH, the authors use Angular distance LSH for clustering. This works because of (i), (ii), i.e. the Maximum Inner Product Search problem is equivalent to the Nearest Neighbor Search problem. We consider SMYRF as a generalized version of Reformer, since it employs Asymmetric LSH clustering to enable grouping of queries and keys that (i) do not have the same vectors, (ii) possibly live outside or inside the unitary $d-$dimensional disk. Apart from this, SMYRF and Reformer are similar: both networks sort vectors based on their LSH hash and both have linear attention complexity. Sinkhorn~\cite{sinkhorn} proposes a differentiable sorting module for clustering queries and keys. The sorting layer is trained end-to-end with the rest of the model. It has only been shown to work well for training from scratch and not for fine-tuning of pre-trained models. Routing Transformer~\cite{routing_transformer} proposes $k-$means clustering. In general, vectors that have small Euclidean distance are not guaranteed to have big inner product. To alleviate this, in Routing Transformer queries and keys are forced to have exactly the same vector representations and are also mapped to a $d-$dimensional unitary disk, exactly as Reformer proposed. Because of these changes,  also this method cannot be applied to pre-trained models. Routing transformer has some other weaknesses as well: (i) the complexity is $O(N^{1.5})$ instead of $O(N\log N)$ which is the attention complexity of SMYRF and Reformer and (ii) the clusters are not guaranteed to be balanced. To solve (ii), ~\cite{routing_transformer} proposes to keep the top-k vectors in each cluster. However, this is not guaranteed to work well since it depends on the clusters ordering.

Comparing to the aforementioned methods, SMYRF is the only method that assigns dynamically queries and keys in clusters and can be applied to pre-trained models. Due to its portability, SMYRF is the first sparse attention model to 
report GLUE results on par with the underlying models. As we showed, SMYRF can be used interchangeably with dense attention before, during and after training.
It also has linear attention complexity, similarly to Reformer. To the best of our knowledge, we are also the first to prove that the problem that all these methods are trying to solve is NP-hard. 

The optimization problem that SMYRF tries to solve is connected to the problem of bi-clustering~\cite{biclustering_first_paper}. Indeed, as shown in the proof of Theorem \ref{min_problem}, the goal in Attention Biclustering is to find a clustering of rows and columns of a matrix that maximizes the sum of the values of the clusters, where each value at position $(i, j)$ depends on the inner product of query $i$ and key $j$. For bi-clustering, iterative algorithms have been proposed~\cite{cheng2000biclustering}. Iterative techniques cannot be applied in the context of attention in which everything happens in a parallel fashion for fast execution in modern hardware.

Finally, there are a lot of others not attention related techniques that can be used to save memory and offer speedups. Examples of such techniques include knowledge distillation~\cite{hinton2015distilling, sanh2019distilbert}, reversible layers~\cite{gomez2017reversible}, gradient checkpointing~\cite{chen2016training}, quantization~\cite{hubara2016quantized} and pruning~\cite{80236, blalock2020state}. SMYRF and all these innovations are not mutually exclusive, i.e. they can be used together for maximum efficiency.

\section{Conclusions}
In this work we presented SMYRF, a novel type of balanced clustering to approximate attention. 
It is based on Asymmetric LSH with novel transformations and an adaptive clustering scheme. As it does not require changes to attention, SMYRF is the first sparse attention method that can be applied directly to pre-trained models. We showed powerful experimental results, in terms of performance, memory and speed.
We also defined the underlying optimization problem that SMYRF tries to solve and we proved it is NP-hard. The strong experimental performance of SMYRF inclines us to believe that good approximation algorithms exist for this problem. Proving approximation guarantees for our method and discovery of better approximation algorithms are left for future work.

\section{Acknowledgements}
We would like to wholeheartedly thank the TensorFlow Research Cloud (TFRC) program that gave us access to v3-8 Cloud TPUs and GCP credits that we used to run our Computer Vision experiments. 
This research has been supported by NSF Grants CCF 1763702,1934932, AF 1901292,
2008710, 2019844 research gifts by Western Digital, WNCG IAP, computing resources from TACC and the Archie Straiton Fellowship.
\newpage

\section{Broader Impact}
Our main contribution is to reduce the computational requirements for machine learning models with attention-layers.
Thus, any broader impact is likely to come from making these models more efficient in both memory impact and inference speed. 
We expect that this will be mostly a good thing since it democratizes the use of big attention layers: those who want to use such models but for whom the computational resources
required are too great (like university labs) will now have an easier time. 
Moreover, GANs and language models will become easier to deploy on phones or other embedded devices.
Further, more efficient training reduces the environmental and energy footprint of deep learning research. As the number of parameters of Transformer models grows, the latter becomes critical~\cite{strubell2019energy}.  

Negative consequences are also possible:
The idea of DeepFakes \cite{deepfakes} has been well-discussed elsewhere; a technique that makes these easier to create clearly has downsides.
On the other hand, any sufficiently determined actor (e.g. a nation-state attempting to commit election-fraud) 
already has access to such technology, so perhaps the marginal negative impact will not be that large.
Still, whenever computational requirements are reduced, the ease of taking
bad actions increases along with the ease of taking good actions.

Finally, the technique proposed in this paper relies heavily on the assumption that attention maps are approximately sparse.
It's possible (though we have no particular reason to think that this has happened or would happen) that, at some intermediate
layer of a complicated neural network, enforcing sparsity when the ground-truth maps are non-sparse could result
in ignoring salient features of atypical data points, thus resulting in fairness-related issues.
Determining whether these approximations cause fairness issues in general could
be an interesting subject for future work.

\clearpage

\title{Supplementary Material \\ SMYRF: \\ Efficient Attention using Asymmetric Clustering}

\author{%
  Giannis Daras\\
  Computer Science Department \\
    The University of Texas at Austin\\
  \texttt{giannisdaras@utexas.edu} \\
  \And
  Augustus Odena \\
  Google Research \\
  \texttt{augustusodena@google.com}
  \And
  Nikita Kitaev \\
  Google Research \\
  \texttt{kitaev@cs.berkeley.edu} 
  \And 
  Alexandros G. Dimakis \\
  ECE Department \\
  The University of Texas at Austin \\
  \texttt{dimakis@austin.utexas.edu} \\
}

\settitle


\section{NP-hardness of Attention Biclustering}

To prove Theorem \ref{main_theorem}, we first prove the following lemma.
\begin{lemma}
The optimization problem:
$$
\min_{\mathcal C_t^L \in \mathcal C^L} ||\hat P_{\epsilon} - P ||_F^2 
$$ is NP-hard.
\label{maxmass}
\end{lemma}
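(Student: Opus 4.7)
The plan is to first rewrite the Frobenius-norm objective as a combinatorial maximization over balanced biclusterings, and then reduce a known NP-hard problem to that maximization.

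For the reformulation, observe that $\hat P_\epsilon[i,j] = P_{ij}$ on intra-cluster pairs and $\hat P_\epsilon[i,j] = -a$ on cross-cluster pairs, so $\|\hat P_\epsilon - P\|_F^2 = \sum_{(i,j)\ \text{cross}} (P_{ij}+a)^2$. Splitting the sum over all $(i,j)$ into intra and cross parts gives
\[
\|\hat P_\epsilon - P\|_F^2 \;=\; \sum_{(i,j)} (P_{ij}+a)^2 \;-\; \sum_{(i,j) \in \text{same cluster}} (P_{ij}+a)^2,
\]
and the first sum is a constant independent of the clustering. Therefore minimizing the original objective is equivalent to the combinatorial problem
\[
\max_{\mathcal C_t^L \in \mathcal C^L}\ \sum_{(i,j) \in \text{same cluster}} (P_{ij}+a)^2.
\]
In the reduction below I will restrict $P$ to binary entries $P_{ij}\in\{0,1\}$. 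Since the total number of intra-cluster pairs is fixed at $|\mathcal Q||\mathcal K|/L$, each intra pair contributes either $a^2$ (if $P_{ij}=0$) or $(1+a)^2$ (if $P_{ij}=1$). Up to additive and multiplicative constants, the problem reduces to maximizing the number of $1$-entries of $P$ that end up inside some cluster block, i.e.\ partitioning the rows and columns of $P$ into $L$ balanced blocks so as to capture as many $1$'s as possible.

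For the hardness step I would reduce from $3$-Dimensional Matching (3DM), which is NP-complete~\cite{gary}. Given a 3DM instance with $|X|=|Y|=|Z|=n$ and triples $T\subseteq X\times Y\times Z$, I would construct query and key sets indexed by the ground sets together with a small number of gadget vertices, a cluster count $L=n$, and a binary matrix $P$ so that the maximum number of captured $1$-entries equals a fixed target $\theta$ if and only if $T$ admits a perfect matching. The forward direction is direct: a perfect matching $\{t_1,\ldots,t_n\}$ induces a clustering that groups the three elements of each $t_i$ in a single cluster, capturing all of their incident $1$'s. For the converse, an optimal clustering with value $\theta$ must realize exactly one triple of $T$ in each cluster, yielding a matching.

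The main obstacle is that 3DM imposes a genuinely tripartite constraint ``$(x,y,z)\in T$'', while the biclustering objective aggregates only pairwise query--key entries $P_{ij}$. A naive encoding (setting $P_{ij}=1$ whenever $i$ and $j$ co-occur in some triple) is not strong enough: a cluster can collect all of its pairwise $1$-entries even when no single triple of $T$ involves the three relevant elements simultaneously. Ruling this out requires a gadget, for instance auxiliary ``triple'' queries/keys that are incentivized to co-cluster with exactly the three vertices of one concrete $t\in T$, or a multi-level value scheme ensuring that only genuine triple configurations reach $\theta$. Making the gadget polynomial-sized, preserving the yes/no gap through the $(P_{ij}+a)^2$ weighting, and respecting the divisibility constraints $L\mid|\mathcal Q|$ and $L\mid|\mathcal K|$ constitute the bulk of the technical work; once the gadget is in place, both directions of the reduction reduce to a simple counting argument over the $L$ clusters.
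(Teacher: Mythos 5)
Your reformulation step is correct and matches the paper exactly: since $\hat P_\epsilon$ agrees with $P$ on intra-cluster pairs and equals $-a$ off them, and the sum of $(P_{ij}+a)^2$ over \emph{all} pairs is clustering-independent, minimizing $\|\hat P_\epsilon - P\|_F^2$ is equivalent to maximizing $\sum_{(i,j)\ \text{intra}} (P_{ij}+a)^2$; the paper calls this the max-mass problem. You also pick the same source problem, 3-dimensional matching. However, the proposal stops exactly where the proof has to start: you state that a naive pairwise encoding of the triples fails, that ``a gadget'' is needed, and that building it ``constitutes the bulk of the technical work'' --- but you never build it. Without the explicit construction and the argument that the optimal biclustering value hits the target if and only if a perfect matching exists, there is no reduction, so this is a genuine gap rather than a stylistic difference.

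For comparison, the paper fills that gap with a weighted (not binary) construction together with a structural exchange argument. It fixes $L = |\mathcal Q|/2$ so every cluster holds exactly two queries and two keys, and uses two weight scales $r_1 \gg r_2 > 0$: each $x \in X$ contributes \emph{both} a query $q_x$ and a key $k_x$ with $q_x \cdot k_x = r_1$, each $y \in Y$ contributes a key, each $z \in Z$ a query, with $q_x \cdot k_y = r_1$ and $k_y \cdot q_z = r_2$ on edges induced by $T$ and $0$ otherwise. The self-pair $(q_x, k_x)$ and the gap between $r_1$ and $r_2$ are what force the intended cluster shape $\{q_x, k_x, k_y, q_z\}$: the paper's exchange argument shows that packing two $X$-pairs into one cluster gains one $r_1$ locally but forfeits $2r_1$ of reward that the displaced keys could have earned elsewhere, so the globally optimal clustering decomposes into one red/blue/green triple per cluster, which is exactly a 3-DM. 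Your binary restriction discards the two-scale weighting and the query--key self-pairing, which are the very devices the paper uses to overcome the tripartite-versus-pairwise obstacle you correctly identify; so even as a plan, the binary route would need a different (and unspecified) mechanism to enforce the triple structure and the balancedness constraints. As written, the proposal is an accurate outline of the paper's strategy with its central construction left as a conjecture.
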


\begin{proof}[Proof of Lemma \ref{maxmass}]
We will show that this problem is NP-hard, by showing that if we could solve in polynomial time all instances of this problem, we could solve in polynomial time the 3-dimensional matching problem (3-DM), which is known to be NP-complete. Following the notation of the main paper, we define $\epsilon=e^{-a}$ and $\hat P_\epsilon$ denotes the queries-keys product matrix with $-a$ in positions that correspond to queries and keys that do not belong to the same cluster.

It holds that:
$$
\min_{\mathcal C^L} ||\hat P_\epsilon - P||_F^2 = \min_{\mathcal C_t^L \in \mathcal C^L} \sum_{(q, k) \not \in \mathcal C_t^L} \left(q\cdot k - (-a)\right)^2
$$

$$
=\min_{\mathcal C_t^L \in \mathcal C^L} \left[ \sum_{(q, k) \in \mathcal Q \times \mathcal K} \left(q\cdot k + a \right)^2 - \sum_{(q, k) \in \mathcal C_t^L} \left(q\cdot k +a\right)^2\right] = 
\min_{\mathcal C_t^L \in \mathcal C^L} \left[- \sum_{(q, k) \in \mathcal C_t^L} \left(q\cdot k + a\right)^2\right]
$$

\begin{equation}
= \max_{\mathcal C_t^L \in \mathcal C^L} \sum_{(q, k) \in \mathcal C_t^L} \left(q \cdot k + a\right)^2
\end{equation}

Since for all given sets $\mathcal Q, \mathcal K$ we can create (in polynomial time) sets $\mathcal Q', \mathcal K'$ such that: $\left(q \cdot k + a\right)^2 = q'\cdot k', \quad \forall (q, k) \in \mathcal Q \times \mathcal K, (q', k') \in \mathcal Q' \times \mathcal K'$, the problem is equally hard to solving:

\begin{equation}
\max_{\mathcal C_t^L \in \mathcal C^L} \sum_{(q', k') \in \mathcal C_t^L} q'\cdot k' 
\label{maxmasseq}
\end{equation}

We can refer to this latest optimization problem as the max-mass problem.

Now consider the case where:
$|\mathcal Q|=|\mathcal K|, \quad L=|\mathcal Q|/2$, i.e. for this problem instance we have the same amount of queries and keys and we want to group them optimally to clusters with the constraint that each cluster should contain exactly $2$ queries and $2$ keys. 

Note that for 1 query and one key per cluster this becomes weighted bipartite matching (which is efficiently solvable). For 1 query and $m$ keys per cluster this is a generalized matching problem, which is also polynomially solvable~\cite{generalizedmatching}.

If we are able to solve the latter with a polynomial algorithm, then we can show that we can solve the 3-DM problem with a polynomial algorithm. 

Any instance of the 3-DM problem can be expressed with finite, disjoint sets $X, Y, Z$ and a set $T$ of triples $(x, y, z): \quad x \in X, y \in Y, z \in Z$. Visually, we can depict any instance of a 3-DM as a graph with three disjoint vertex sets, with $T$ containing the edges of the graph. For example, the 3-DM instance $X=\{1_{\textrm{red}}, 2_{\textrm{red}}\}, Y=\{1_{\textrm{blue}}, 2_{\textrm{blue}}\}, Z=\{1_{\textrm{green}}, 2_{\textrm{green}}\}, T = \{ (1_{\textrm{red}}, 1_{\textrm{blue}}, 1_{\textrm{green}}), (1_{\textrm{red}}, 2_{\textrm{blue}}, 2_{\textrm{green}}), (2_{\textrm{red}}, 1_{\textrm{blue}}, 1_{\textrm{green}})\}$ is shown in (1,1) of Figure \ref{np-hardness}. We are looking for a set $T' \subseteq T$ in which every vertex is covered exactly once. Finding this solution, in case it exists, it is known to be an NP-hard problem. For this example, there is a valid solution, which is shown in (1, 2) of Figure \ref{np-hardness}.

We can transform any instance in the following way: we create one query and one key vector for each vertex $x \in X$ with the property that their inner product is some large positive constant $r_1\in \mathbb R^{+}$. We can visualize this using red edges, following the previous example where we denoted with red color the vertices of $X$. We also set the inner product of any key vector that corresponds to vertex of $X$ with all the other query vectors to be $0$. Visually, a ``missing" edge means that the inner product of the corresponding vectors is $0$ (no-reward). We also create a key vector for each vertex $y \in Y$ with the property that if $(x, y, z) \in T$ for some $z$, then the key vector for $y$ and the query vector for $x$ have inner product $r_1$, else $0$. We can show the non-zero edges of this category visually with blue color, following the previous example. Note that blue and red edges are equivalent in terms of the inner product between the vertices they connect, since both have inner product $r_1$. Finally, we create a query vector for each $z \in Z$ with the property that if $(x, y, z) \in T$ for some $x$ then the key vector for $y$ and the query vector for $z$ have inner product $r_2$, else $0$ where $r_2 \in \mathbb R^{+}$ is a small positive constant. Again, we can show the non-zero edges of this category with green color, following the previous example. For the given example, the transformation is shown in (2, 1) of Figure \ref{np-hardness}.

We have hypothesized that we have a polynomial algorithm to solve the max-mass problem of (\ref{maxmasseq}). The key observation for our proof is that, by construction, the best cluster in terms of potential accumulated mass is a cluster with one red, one blue and one green edge, as the ones shown right of the dashed bar of Figure \ref{np-hardness-cases}. Indeed, the only way to obtain a cluster of more mass is to group two blue vertices with two red vertices, as shown in (1, 1) of Figure \ref{np-hardness-cases}. By doing that, you earn one more $r_1$ compared to the clustering shown in (1, 2) of Figure \ref{np-hardness-cases}, but you lose $2\cdot r_1$, which are the rewards that they red keys could give (as they are left with no connections). Thus, the two clusterings on the right side of Figure \ref{np-hardness-cases} are preferable compared to any other potential two clustering that can be obtained by choosing the left grouping.

Since we have proved that the best possible clustering is one with one red, one blue and one green edge, it is now left to prove that if there is a 3-DM, then it is possible to group all queries and keys into clusters with this optimality property. Indeed, if there is a 3-DM, we can cover each vertex exactly one time, by matching any vertex of $X$ with a vertex from $Y$ and a vertex from $Z$. With our transformation, this means that we can group each red node with itself and one blue and one green vertex, which is an optimal cluster as it contains one red, one blue and one green edge. Thus, solving polynomially our problem would mean that we could also solve in polynomial time the 3-DM, which is known to be NP-hard.

\begin{figure}[!htp]
    \centering
    \includegraphics[width=0.65\textwidth]{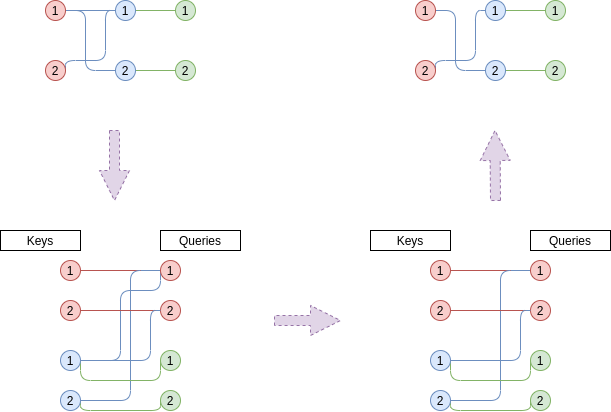}
    \caption{(1, 1): Instance of $3-$DM. We denote with red color the $X$ vertex set, with blue the $Y$ vertex set and with green the $Z$ vertex set. (2, 1): Ours transformation for the reduction. Red and blue edges have reward $r_1$, while green edges have reward $r_2<<r_1$. Missing edges have reward $0$. We create one query and one key for each vertex of $X$. We also create one key (blue color) for each vertex of $Y$ and one query (green color) for each vertex of $Z$. Connections between red queries and blue keys, as well as, connections between blue keys and green queries follow the problem instance. (2, 2): Optimal queries, keys clustering in groups of 2 for the max-mass \ref{maxmasseq} problem. (1, 2): Transformation of (2, 2) solution back to the $3-$DM instance.}
    \label{np-hardness}
\end{figure}

\begin{figure}[!htp]
    \centering
    \includegraphics[width=0.6\textwidth]{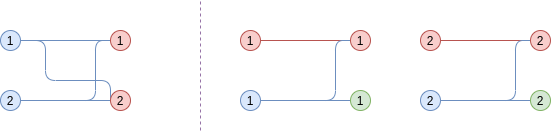}
    \caption{Illustration of potential clusterings. (1, 1): sub-optimal clustering. (1, 2): optimal clusterings. Even though the clustering at the left side obtains more mass compared to any of the clusterings in the right side, it loses entirely the rewards that red keys can give. Indeed, clustering on the left side has one $r_1$ reward more than any of the two clusterings on the right, but in further clusterings red keys $\{1, 2\}$ will not be matched with anything (by construction) and thus a total reward of $2r_1$ will be lost.}
    \label{np-hardness-cases}
\end{figure}
\end{proof}

\begin{proof}[Proof of Theorem \ref{main_theorem}]
We will show that if we can solve in polynomial time the problem: $\min_{\mathcal C^L} ||\sigma(\hat P_0) - \sigma(P)||_F^2$, then we can also solve in polynomial time the problem $\min_{\mathcal C^L}||\hat P_\epsilon - P||_F^2$ (for an appropriate $\epsilon$) which we have proven to be NP-hard. 

We are given sets $\mathcal Q, \mathcal K$ and a number $L$. For each $q_i \in \mathcal Q$, we create a key vector $k_{q_i}$ such as $q_j \cdot k_{q_i} = \begin{cases}
a, \quad \textrm{if} \ i=j \\
-\infty, \quad \textrm{o/w}
\end{cases}$, where $a$ is a positive constant the choice of which we will determine later in this proof.

We denote the augmented key set with $\mathcal K'$. 

We will now solve, with our hypothetical polynomial algorithm, the following optimization problem for our new input set:

$$\min_{\mathcal C^L} ||\sigma(\hat P_0) - \sigma(P)||_F^2
$$

It holds that:
$$
\min_{\mathcal C^L} || \sigma(\hat P_0) - \sigma(P) ||_F^2 = \min_{\mathcal C_t^L \in \mathcal C^L}\sum_{(q, k) \not \in \mathcal C_t^L}\left(\frac{e^{q\cdot k}}{q_D}\right)^2 + \sum_{(q, k) \in \mathcal C_t^L} \left( \frac{e^{q\cdot k}}{q_D} - \frac{e^{q\cdot k}}{q_{\mathcal C_t^L}}\right)^2
$$

$$
= \max_{\mathcal C_t^L \in \mathcal C^L}\sum_{(q, k) \in \mathcal C_t^L} e^{2q\cdot k} \cdot \left(\frac{2}{q_D \cdot q_{\mathcal C_t^L}} - \frac{1}{q_{\mathcal C_t^L}^2} \right),
$$
where $q_D$ denotes the denominator of the dense softmax and $q_{C_t^L}$ denotes the denominator of the cluster softmax, i.e. $q_D = \sum_{k \in \mathcal K} e^{q \cdot k}$ and $q_{C_t^L} = \sum_{k \in \mathcal C_t^L} e^{q\cdot k}$ for a given cluster $\mathcal C_t^L \in \mathcal C^L$.

We will now show that for a proper choice of $a$, this problem is equivalent to:

$$
\max_{\mathcal C_t^L \in \mathcal C} \sum_{(q,k) \in \mathcal C_t^L} e^{2q\cdot k}.
$$

Let $R_q =\frac{2}{q_D q_{C_t^L}} - \frac{1}{q^2_{C_t^L}}$. As we increase the value of $a$, the inner product of its query with its' special key gets significantly bigger compared to other inner products and thus for large enough values of $a$, we know that each query will get clustered with its' special key. We can control how close $q_D, q_{\mathcal C_t^L}$ are by setting appropriately the $a$ value. Specifically, we choose $a$ such that $q_D (1-\epsilon) < q_{\mathcal C_t^L}, \ \forall q \in \mathcal Q, \ \mathcal C_t^L \in \mathcal C^L$, where $\epsilon=\epsilon(a)$ a small positive constant the choice of which we will determine soon. By definition, $q_{\mathcal C_t^L}$ is always smaller than $q_D$, and thus we for that choice of $a$ we have $q_D(1-\epsilon) < q_{\mathcal C_t^L} < q_D$. Then, $R_q > \frac{2}{q_D^2} - \frac{1}{q_D^2 (1-\epsilon)^2} = \frac{1}{q_D^2} (2 - \frac{1}{(1-\epsilon)^2}) = \frac{1-\epsilon'}{q_D^2}$ where $1 + \epsilon' = \frac{1}{(1-\epsilon)^2}$. But also, $R_q = \frac{2q_{\mathcal C_t^L} - q_D}{q_{\mathcal C_t^L}^2q_D} < \frac{2q_D - q_D}{q_{\mathcal C_t^L}^2q_D} = \frac{1}{q_{\mathcal C_t^L}^2} < \frac{1}{(1-\epsilon)^2q_D} = (1 + \epsilon')\frac{1}{q_D^2}$. Then, we have that:
\begin{equation}
\frac{1-\epsilon'}{q_D^2} < R_q < \frac{1 + \epsilon'}{q_D^2}.
\label{boundR}
\end{equation}

Now consider the following optimization problems:
$$
\begin{cases} 
P_0: \quad \max \sum_{(q, k) \in \mathcal C_t^L} e^{2qk} R_q \\
P_1: \quad \max \sum_{(q, k) \in \mathcal C_t^L} \frac{e^{2qk}}{q_D}
\end{cases}.
$$
Let $F(c), G(c)$ the objective functions of $P_0, P_1$ respectively.

Using (\ref{boundR}), we get that:
\begin{equation}
(1-\epsilon') G(c) \leq F(c) \leq (1 + \epsilon ') F(c).
\label{boundGF}
\end{equation}

Our claim is that for a suitable choice of $\epsilon'$, i.e. for a suitable choice of $a$, it holds that $\argmax P_0 = \argmax P_1$\footnote{We assume that if there is a set of optimal solutions, then we pick with the same order from that set for both problems.}.We prove that by contradiction. Let $c_1$ be the optimal choice of $P_0$ and $c_2$ be the optimal choice of $P_1$. Then, we know that $F(c_1) > F(c_2)$ and $G(c_2) > G(c_1)$. Using (\ref{boundGF}), we get that:
\begin{equation}
(1-\epsilon') G(c_1) - (1 + \epsilon')G(c_2) < F(c_1) - F(c_2) < (1+\epsilon')G(c_1) - (1-\epsilon') G(c_2).
\label{solbound}
\end{equation}
We denote with $d$ the gap between the optimal value $F(c_1)$ and the non optimal solution $F(c_2)$, i.e. $d = F(c_1) - F(c_2)$. Then, from (\ref{solbound}), we get that:
$$
d < (1 + \epsilon')G(c_1) - (1-\epsilon')G(c_2) - (1-\epsilon')G(c_1) + (1+\epsilon')G(c_2) = 2e' (G(c_1) + G(c_2)). 
$$

Let $\theta_1$ the maximum value of $G(c_1) + G(c_2)$ among all the clusterings $c_1, c_2 \in C^L$, i.e. among all the possible valid clusterings in $L$ groups. Then, $d < 2\epsilon' \theta_1$. However, since $F$ is a function that maps from discrete clusterings to real numbers, two non-optimal solutions of $F(c)$ differ for at least a minimum distance. In that case, the minimum distance should be at least $e^{p_{\min}}R_{\min}$, where $p_{\min}$ is the minimum product between any query and any key and $R_{\min}$ is the minimum value that $R$ can take for any clustering. Let $\theta_2 = e^{p_{\min}}R_{\min}$. Then, $d \geq \theta_2$. If we choose $\epsilon'$ such that: $2\epsilon' \theta_1 < \theta_2$ then we have a contradiction. This is always possible since we can set the value of $\epsilon'$ to arbitrarily small values as we grow $a$ arbitrarily big. Thus, we proved that the problems $P_0, P_1$ have the same $\argmax$ for a proper choice of $a$. Then, for that choice of $a$ the problem $\min_{C^L} ||\sigma(\hat P_0) - \sigma(P)||_F^2$ is equivalent to $P_1$ which is equivalent to the problem:
$$
\max_{\mathcal C_t^L \in \mathcal C} \sum_{(q,k) \in \mathcal C_t^L} e^{2q\cdot k},
$$
since $q_D$ does not affect the choice of optimal clusters.

In the latter problem, we can replace all queries $q$ and keys $k$ with new vectors $q', k'$ such that: $q'\cdot k' = e^{2q\cdot k}$. This is equally hard to solving:
$$
\max_{\mathcal C_t^L \in \mathcal C} \sum_{(q,k) \in \mathcal C_t^L}  q\cdot k
$$
which we proved to be NP-hard.
\end{proof}

\section{Code}
To encourage further research in sparse attention models, we open-source all our code and we release a Python package, named \texttt{smyrf}. The repository for the code is the following: \href{https://github.com/giannisdaras/smyrf}{https://github.com/giannisdaras/smyrf} . \texttt{smyrf} implements SMYRF attention for Pytorch~\cite{paszke2019pytorch}. We plan to release implementation for Tensorflow~\cite{tensorflow} soon as well. \texttt{smyrf} contains various examples on pre-training and finetuning state-of-the-art models for Computer Vision and Natural Language Processing tasks. Regarding examples, at the moment \texttt{smyrf} includes:

\begin{itemize}
    \item a TPU-compatible implementation of SMYRF-BigGAN, based on the official Pytorch implementation (\href{https://github.com/ajbrock/BigGAN-PyTorch}{https://github.com/ajbrock/BigGAN-PyTorch}) for GPUs.
    \item code for training SMYRF-BigGAN on Celeba-HQ on a single TPU device.
    \item interactive notebooks showing how to use a pre-trained BigGAN for image generation with SMYRF on Celeba-HQ and ImageNet.
    \item tools to visualize cluster memberships for pixels of SMYRF generated images. 
    \item code for replacing dense attention with SMYRF layers for state-of-the-art pre-trained NLP models, compatible with HuggingFace's Transformers~\cite{wolf2019huggingfaces} library.
    \item interactive notebooks for fine-tuning pre-trained NLP models on GLUE~\cite{GLUE} and IMDB~\cite{IMDB}.
    \item tools for profiling SMYRF's performance compared to dense attention. 
\end{itemize}

We also share the weights of SMYRF-BigGAN trained on Celeba-HQ at resolutions $128\times 128$ and at $256\times 256$ with attention at $128\times 128, 256\times 256$ respectively. Although these models are outperformed by non-attention GANs (e.g. StyleGAN~\cite{stylegan2, stylegan1}), we believe that releasing them will help researchers understand better attention at higher resolutions. Hopefully, SMYRF will motivate the usage of more attention layers
on new GAN architectures.

\section{Singular values decay for pre-trained models}
As noted in the paper, row-wise softmax 
can change the rank of a matrix. For example, the matrix $\begin{bmatrix} 1 & 0 \\ 2 & 0 \end{bmatrix}$ has rank $1$, while the matrix $\sigma\left(\begin{bmatrix} 1 & 0 \\ 2 & 0 \end{bmatrix}\right) = \begin{bmatrix}
0.7311 & 0.2689 \\
0.8808 & 0.1192
\end{bmatrix}$ has rank $2$. Back to the context of attention, we have defined the product matrix $P = Q\cdot K^T$, where $Q:\mathbb R^{|\mathcal Q| \times d }$ represents the queries matrix and $K:\mathbb R^{|\mathcal K| \times d}$ the keys matrix.
By the definition of rank, if the embeddings dimension is smaller than the sequence length dimension, i.e. $d < \min(|\mathcal Q|, |\mathcal K|)$, then P is low rank. However, the attention matrix after softmax, i.e. $\sigma(P)$, could be a full rank matrix. In this section, we provide experimental evidence that 
attention maps produced by pre-trained models are actually near low-rank. 

Figures \ref{sing_decay_biggan}, \ref{sing_decay_bert} depict the singular values of the attention maps (for a random input\footnote{We experimented with different random inputs and there is no qualitative difference in the decay of singular values)}) for a pre-trained BigGAN (attention map dimensions: $4096\times 1024$) and a pre-trained BERT (shown attention map dimensions: $64\times 64$, $256\times 256$). For the pre-trained BigGAN (Figure \ref{sing_decay_biggan}) the decay in singular values is exponential. Specifically, in Figure \ref{sing_decay_biggan} most singular values are very close to 0, which means that the attention map is effectively low rank. Figure \ref{sing_decay_bert} shows decay of singular values for a pre-trained BERT for sequence lengths: (a) 64, (b) 128. We illustrate decay for 144 heads (12 heads for each one of the 12 layers). For the majority of heads, singular values decay exponentially. We also see that the heads that do not demonstrate exponential decay in the singular values maintain this property for both inputs (e.g. see the red line in both plots). In our experiments, we find that these heads are harder to approximate with SMYRF.

\begin{figure}[!htp]
    \centering
    \includegraphics[width=0.75\textwidth]{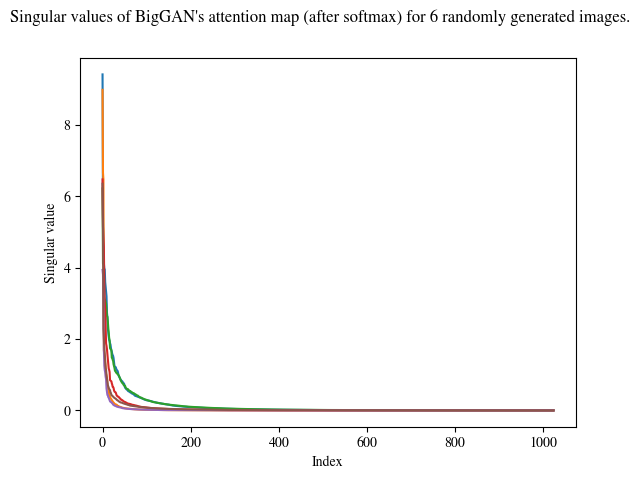}
    \caption{Decay of singular values of the attention map (after softmax) of a pre-trained BigGAN. Decay of singular values is exponential, which means that the matrix after softmax is effectively low rank.}
    \label{sing_decay_biggan}
\end{figure}
\begin{figure}[!htp]
    \begin{subfigure}{\textwidth}
    \centering
        \includegraphics[width=0.75\textwidth]{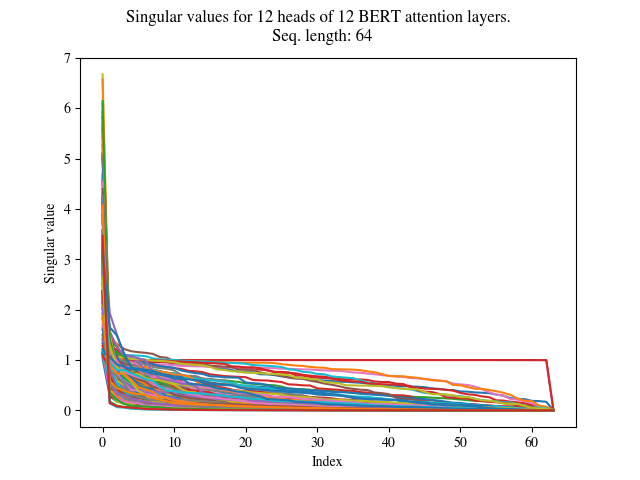}
        \caption{}
        \label{sing_decay_bert64}
    \end{subfigure}
    
    \begin{subfigure}{\textwidth}
    \centering
    \includegraphics[width=0.75\textwidth]{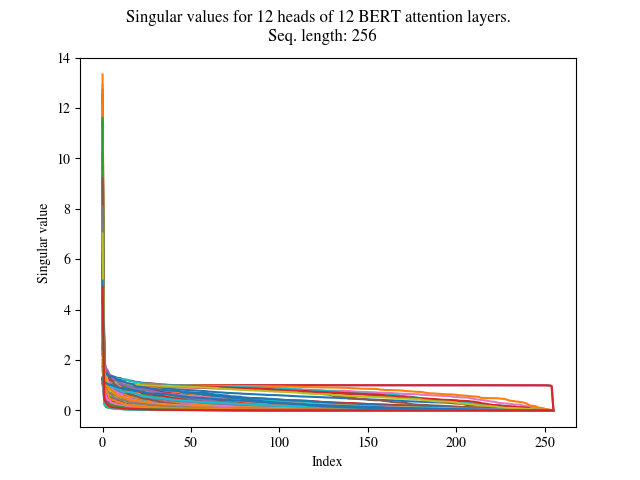}
    \caption{}
    \label{sing_decay_bert128}
\end{subfigure}
\caption{Decay of singular values for a pre-trained BERT for sequence lengths: (a) 64, (b) 128. We show decay of singular values 144 heads (12 heads for each one of the 12 layers). For the majority of heads, singular values decay exponentially. We also see that the heads that do not demonstrate exponential decay in the singular values maintain this property for both inputs (e.g. see the red line in both plots). We find that these heads are harder to approximate with SMYRF.}
\label{sing_decay_bert}
\end{figure}

\section{Cluster memberships for generated images of a pre-trained BigGAN}
In this section, we visualize how SMYRF's adaptive clustering algorithm assigns queries in clusters for a pre-trained BigGAN. This inspection gives useful insights into how the algorithm actually works in practice. 

Top row of Figure~\ref{vismemberships} shows a random maltese dog generated by a pre-trained BigGAN~\cite{biggan}. The second row, illustrates how a single SMYRF hashing round assigns queries and keys for this particular image in two clusters: the first cluster is denoted with gray and the second with white color. As shown, SMYRF assignments preserve locality while enabling the modeling of arbitrary complex dependencies between input pixels. Indeed, pixels in the same neighborhoods are mostly organized in the same cluster. This observation is even more pronounced for background pixels (see big gray blocks). However, we also see that distant pixels sometime belong to the cluster as well. By only looking at the assignments in clusters (second row), we can infer that the image is roughly separated in three parts: the top part (mostly gray pixels), the middle part (mostly white pixels) and the bottom part (mostly gray pixels). These parts correspond to the top background, the dogs' face and the bottom background respectively. Third row of Figure~\ref{vismemberships} illustrates (for the same image) assignments in 128 clusters. Each cluster contains 32 queries and is denoted with a distinct color. Again, we observe that clusters are often local. Indeed, usually consecutive pixels or nearly consecutive pixels are denoted with the same color. For such large number of clusters, it becomes very difficult to extract semantic information from the clustering map without looking at the original image. However, by careful looking at both the attention map and the generated image we can make interesting observations. For instance, we see that distant background pixels are clustered together with much greater frequency compared to other distant non-background pixels. In other words, SMYRF often clusters together background pixels even if they belong to distant grid positions in the generated image (see for example colors in top and last row of the grid). 

\begin{figure}[!htp]
\centering
\begin{subfigure}{\textwidth}
\centering
\includegraphics[]{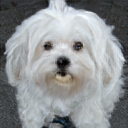}
\caption{Generated maltese dog from a pre-trained BigGAN.}
\label{sample_memberships}
\end{subfigure}
\bigbreak
\begin{subfigure}{0.6\textwidth}
\includegraphics[width=\textwidth]{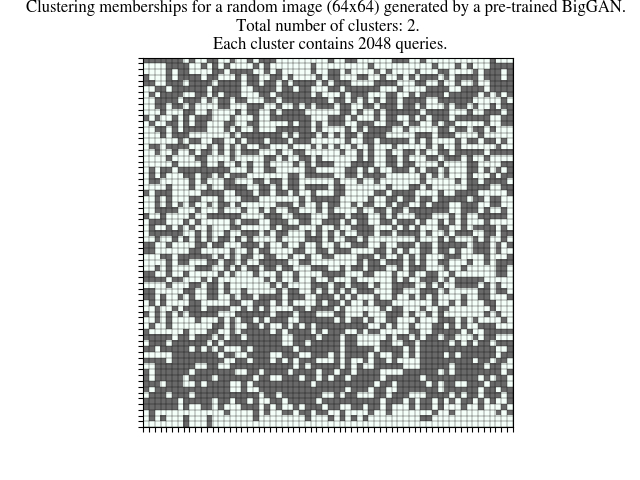}
\caption{Visualization of SMYRF cluster assignments for this image (single hash). Total number of clusters: 2.}
\label{2c}
\end{subfigure}
\bigbreak 
\begin{subfigure}{0.6\textwidth}
\includegraphics[width=\textwidth]{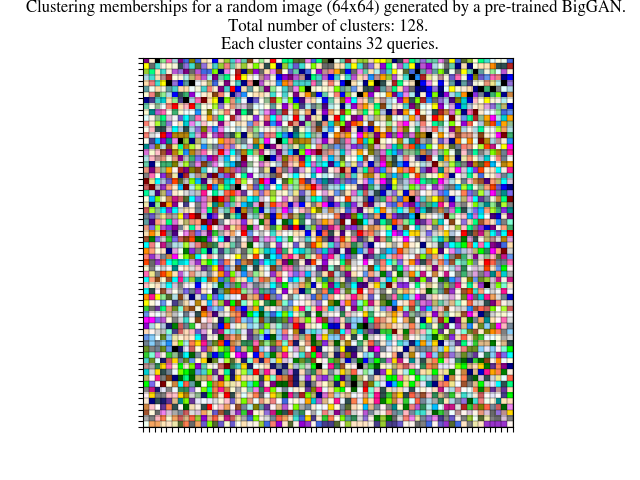}
\caption{Visualization of SMYRF cluster assignments for this image (single hash). Total number of clusters: 128.}
\label{128c}
\end{subfigure}
\caption{Visualization of clustering assignments for a generated image by a pre-trained BigGAN.}
\label{vismemberships}
\end{figure}

\section{SMYRF Clustering}
\subsection{Asymmetric Locality Sensitive Hashing (ALSH)}
SMYRF clusters depend on the hashing indices of asymmetrically transformed queries and keys. As mentioned in the paper, we are looking for functions $F: \mathbb R^{d}\to \mathbb R^{d'}, G: \mathbb R^d \to \mathbb R^{d'}$ such as: $||F(q) - G(k)||_2^2 = D(q\cdot k), \ \forall (q, k)$ where $D:\mathbb R \to \mathbb R$ a decreasing function that depends only on the inner product $q\cdot k$. Essentially, functions $F, G$ are applied to queries and keys to convert the problem of Maximum Inner Product Search (MIPS) to Nearest Neighbor Search (NNS). For the latter problem, a lot lot of effective Locality Sensitive Hashing (LSH) functions have been proposed (e.g. ~\cite{e2lsh, VoronoiLSH, nearoptimallsh}). The novel idea of converting MIPS to NNS is called Asymmetric Locality Sensitive Hashing (ALSH) and was first introduced in~\cite{l2lsh}. Since then, a lot of different asymmetric transformations have been proposed~\cite{h2lsh, e2lsh, xbox}. In this section, we show why previously proposed transformations are not suitable for our problem and how our novel asymmetric transformations, defined in Equation 4, relate to previous work. 

We list the asymmetric transformations that have been widely used to convert a MIPS to NNS:

\[
\left\lbrace
\begin{tabular}{@{}p{15cm}p{15cm}} 
\cite{l2lsh}: $F(q_i) = \left[q_i; \frac{1}{2}, ...; \frac{1}{2}\right], \ G(k_i) = \left[ Uk_i; ||Uk_i||_2^2; ...;||Uk_i||_2^{2^m}\right]$ \\ \\

\cite{xbox}: $F(q_i) = \left[q_i ; 0\right], \ G(k_i) = \left[k_i; \sqrt{M_K^2 - ||k_i||_2^2}\right]$ \\ \\

\cite{h2lsh}: $F(q) = \frac{M_K}{||q||_2}\cdot \left[ q;0\right], \ G(k) = \left[k; \sqrt{M_K^2 - ||k||_2^2}\right]$ \\
\end{tabular}\right.
\]
where $M_K = \max_{k}||k||_2$ and U a positive constant such as: $||U\cdot k_i||_2^{2^{m + 1}} \to 0, \ \forall k_i \in \mathcal K$. 
The corresponding Euclidean distances of the transformed vectors are given below:
\[
\left\lbrace
\begin{tabular}{@{}p{15cm}p{15cm}} 
\cite{l2lsh}:
$||F(q_i) - G(k_i)||_2^2 = ||q_i||_2^2 + \frac{m}{4}  - 2Uq_i\cdot k_i + ||U \cdot k_i||_2^{2^{m+1}}$\\ \\

\cite{xbox}:  
$ ||F(q_i) - G(k_i)||_2^2 = ||q_i||_2^2 + M_K^2 - 2 q_i \cdot k_i$ \\ \\

\cite{h2lsh}:
$||F(q_i) - G(k_i)||_2^2 = 2\cdot M_K^2 -2\frac{M_K}{||q_i||}\cdot q_i\cdot k$
\end{tabular}\right.
\]

In all these transformations the Euclidean distance of the transformed vectors, i.e. $||F(q_i) - G(k_i)||_2$ decreases linearly with the inner product $q_i\cdot k_i$. However, an extra term, $p(||q_i||)$, appears. Indeed, these transformations were proposed for the case of a single query (e.g. a user) and multiple keys (e.g. movies) and for such applications $||q_i||$ is considered constant. On the contrary, for our setting, the transformations of \cite{l2lsh, h2lsh, xbox} cannot be applied since $||q||_2$ is no longer a constant. To illustrate this better, consider the case where $q_1, q_2 \in \mathcal Q$ with $q_1 \neq q_2$ and $k \in \mathcal K$ a key such as: $q_1 \cdot k = q_2 \cdot k$. Since we are looking for big inner products, we expect to have transformations $F, Q: ||F(q_1) - G(k) ||_2 = ||F(q_2) - G(k)||_2$. For \cite{l2lsh, xbox}, if $||q_1||_2 < ||q_2||_2$ then $||F(q_1) - G(k)||_2 < ||F(q_2) - G(k)||_2$ and for \cite{h2lsh}: $||F(q_1) - G(k)||_2 > ||F(q_2) - G(k)||_2$. Thus, all \cite{l2lsh, h2lsh, xbox} do not satisfy our desired property, i.e. $||F(q_1) - G(k)||_2 = ||F(q_2) - G(k)||_2$. To solve this problem, we propose (see main paper) the novel asymmetric functions:
\begin{equation}
    F(q_i) = \left[q_i; 0; \sqrt{M_Q^2 + M_K^2 - ||q_i||_2^2} \right], \qquad G(k_i) = \left[k_i; \sqrt{M_Q^2 + M_K^2 - ||k_i||_2^2}; 0\right]
\end{equation}
where we use the constants $M_Q = \max_{q_i}||q_i||_2, \quad M_K = \max_{k_i}||k_i||_2$, or any other upper bound on the norms. 
With this transformation, all queries and keys are mapped to a $(d+2)$-dimensional ball with radius $\sqrt{M_Q^2 + M_K^2}$ and the distance of the transformed vectors decreases linearly with the inner product of the original vectors:
\begin{equation}
    ||F(q_i) - G(k_i)||_2^2 = 2 \cdot \left( M_Q^2 + M_K^2 -  q_i\cdot k_i\right).
\end{equation}
Note that the Euclidean distance of the transformed vectors depends only on the inner product of the original vectors and not on individual norms $||q_i||_2$ as in previous work.

\subsection{Adaptive Clustering}
The next step, after the asymmetric transformations, is to map the transformed queries $F(q)$ and keys $G(k)$ to real numbers, so that if $||F(q) - G(k)||_2$ is small, then $|h(F(q)) - h(G(k))|$ is also small with high probability, where $h:\mathbb R^{d'} \to \mathbb R$ is the mapping function. After mapping, we sort independently queries and keys based on their hash and we split them into groups of equal size.  
There are numerous hashing functions ~\cite{andoni2015practical, e2lsh, VoronoiLSH, qlsh} $h:\mathbb R^{d'} \to \mathbb R$ that belong to the LSH family that we can leverage to achieve that. One of the most widely adopted hash functions for locality sensitive hashing is E2LSH~\cite{e2lsh}: 
\begin{equation}
h_{\textrm{E2LSH}}(u) =  \left \lfloor \frac{(u \cdot a) + b}{r} \right \rfloor
\label{e2lsh_hash}
\end{equation}
where $a=(a_1, ..., a_d') \in \mathbb R^{d'}$ with $a_i \in \mathcal N(0, 1)$ and $b \in \mathcal U(0, r)$ and $r$ is a scalar parameter which controls LSH sensitivity. Since we re-group vectors by sorting on their LSH index, the floor operator and the division with $r$ are not needed. 
Our simplified hashing function is defined as:
\begin{equation}
    h_{\textrm{ours}}(u) = (u\cdot a) + b
    \label{simplified_e2lsh}
\end{equation}
We roughly removed a division by a constant. Thus, this simplified hashing function preserves the locality-sensitive properties of E2LSH~\cite{e2lsh}. Namely, if $||u_1 - v_1||_2 \leq || u_2 - v_2||_2$ then with high probability: $|h(u_1) - h(v_1)| \leq |h(u_2) - h(v_2)|, \ \forall u_1, u_2, v_1, v_2 \in \mathbb R^{d'}$. 

\subsection{Merging hashing rounds}
In our experiments, we run multiple hashing rounds each time, similarly to \cite{reformer}. Each time we run LSH, we end up with a (possibly) different clustering assignment and thus (possibly) different attention output. Specifically, we repeat the process $H$ times (where $H$ is usually a small constant, e.g. 8) to reduce the probability that we miss big inner products.
In this section, we explain how we merge the partial attention outputs (made from different hashing rounds) into a single attention output.

Without loss of generality, we will present the merging algorithm for a single query $q$. At each clustering round $h$ we get (from the adaptive clustering) a set of key vectors $\mathcal K_{h_q} \subseteq \mathcal K$. The corresponding attention output is:
$$
o_q^h = \sum_{k \in \mathcal K_{h_q}} w_k v_k, \qquad w_k = \frac{e^{q \cdot k}}{\sum_{k' \in \mathcal K_{h_q}} e^{q \cdot k'}} 
$$
We merge the attention outputs of the different rounds with a weighted sum. The weight, $a_h$, for each round $h$, is the fraction of the softmax mass that was acquired in this round to the total mass acquired by all rounds. Formally the attention output $o_q'$ for query $q$ is computed as:
\begin{equation}
o_q' = \sum_{h=1}^{H}a_h \cdot \sum_{k \in \mathcal K_{h_q}} w_k v_k, \qquad w_k =  \frac{e^{q \cdot k}}{\sum_{k' \in \mathcal K_{h_q}} e^{q \cdot k'}}, \qquad a_h = \frac{  \sum_{k' \in \mathcal K_{h_q}} e^{q \cdot k'}}{\sum_{n=1}^{H}\sum_{k' \in \mathcal K_{n_q}} e^{q \cdot k'}}
\label{merging_scheme}
\end{equation}

To explain this merging scheme, we will show that under certain assumptions, this merging scheme can lead to exact approximation of the real attention output. We start by listing these assumptions. 
\begin{assumption}[Sparsity of weights]
For any given query $q \in \mathcal Q$, the key set $\mathcal K$ has at most $T$ and at least one vectors $k_i \in \mathcal K_q$ such as:
$$
k_i \in \mathcal K_q, \ k_j \not \in \mathcal K_q \Rightarrow \frac{e^{q \cdot k_j}}{e^{q \cdot k_i}} = 0
$$
\label{sparsity}
\end{assumption}
From Assumption \ref{sparsity}, it follows that at most $T$ and at least one key vector $k_i$ gets a non-zero score, $w_i \neq 0$, after softmax.

\begin{assumption}[Fairness of LSH clustering]
For any given query $q \in \mathcal Q$ and two keys $k_1, k_2 \in \mathcal K$, if $w_{k_1} \neq 0 \ \wedge w_{k_2} \neq 0$, then $\sum_{n=1}^{H} \sum_{k_1 \in \mathcal K_{n_q}} 1 = \sum_{n=1}^{H} \sum_{k_2 \in \mathcal K_{n_q}} 1$
\label{lsh_fairness}
where $H$ denotes the hashing rounds and $\mathcal K_{n_q}$ denotes the chosen key set for query $q$ at hash round $n$.
\end{assumption}

Assumption \ref{lsh_fairness} simply states that each query is clustered the same number of times with all its' big inner products along the different hashing rounds.

\begin{assumption}[Effectiveness of LSH clustering]
There is a small constant $H$, which denotes the number of hashing rounds, such as:
$$
\forall k \in \mathcal K: \ w_q \neq 0 \Rightarrow \exists n: \ 1 \leq n \leq H \ \wedge \ \ k \in \mathcal K_{q_n}.
$$
\label{good_clustering}
\end{assumption}
The latter assumption states that we need a small number of hashing rounds $H$ to catch all big inner products of a given query.

We state the following theorem:
\begin{theorem}
If Assumptions \ref{sparsity}, \ref{lsh_fairness}, \ref{good_clustering} hold, then our approximation algorithm is exact. 
\label{exactness}
\end{theorem}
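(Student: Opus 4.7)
The plan is to exploit Assumptions~\ref{sparsity}--\ref{good_clustering} in sequence and show that the merging formula in~(\ref{merging_scheme}) reduces algebraically to the exact dense attention output. No error terms ever need to be bounded, since all three assumptions are stated with equality rather than approximation.

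First, I would use Assumption~\ref{sparsity} to restrict attention to the set $\mathcal K_q \subseteq \mathcal K$ of keys with nonzero softmax weight. Any $k_j \notin \mathcal K_q$ satisfies $e^{q\cdot k_j} = 0$ relative to the in-set exponentials, so such keys contribute nothing either to the true softmax denominator $\sum_{k \in \mathcal K} e^{q\cdot k}$ or to any per-round denominator $\sum_{k' \in \mathcal K_{h_q}} e^{q\cdot k'}$. Writing $S := \sum_{k \in \mathcal K_q} e^{q\cdot k}$ and $S_h := \sum_{k \in \mathcal K_q \cap \mathcal K_{h_q}} e^{q\cdot k}$, the true attention collapses to $o_q = S^{-1} \sum_{k \in \mathcal K_q} e^{q\cdot k} v_k$, and every round-$h$ denominator in~(\ref{merging_scheme}) collapses to $S_h$.

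Next, I would substitute $w_k = e^{q\cdot k}/S_h$ and $a_h = S_h/\sum_n S_n$ into the merged expression. The factor of $S_h$ appearing in $a_h$ cancels the $1/S_h$ inherited from $w_k$, yielding $o_q' = \left(\sum_n S_n\right)^{-1} \sum_{h=1}^H \sum_{k \in \mathcal K_q \cap \mathcal K_{h_q}} e^{q\cdot k} v_k$. Now I invoke Assumptions~\ref{lsh_fairness} and~\ref{good_clustering} jointly: Assumption~\ref{good_clustering} guarantees that every $k \in \mathcal K_q$ appears in at least one round, and Assumption~\ref{lsh_fairness} guarantees that any two important keys appear the same total number of times, call it $m$, across the $H$ rounds. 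Swapping the order of summation, the numerator becomes $m \sum_{k \in \mathcal K_q} e^{q\cdot k} v_k$, and symmetrically $\sum_n S_n = m S$. The two factors of $m$ cancel and we recover $o_q' = S^{-1} \sum_{k \in \mathcal K_q} e^{q\cdot k} v_k = o_q$.

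The only conceptually subtle step is the summation swap that promotes Assumption~\ref{lsh_fairness} from a pairwise statement into a uniform multiplicity $m$ shared by every important key simultaneously; once this is made explicit, everything else is bookkeeping. I expect this to be the main (and only) obstacle. The broader design insight the proof makes rigorous is that the choice $a_h \propto S_h$ is not arbitrary: it is precisely the reweighting that makes each round's contribution count in proportion to the softmax mass it captures, so that under fair, exhaustive LSH clustering the multiplicities cancel exactly between numerator and denominator.
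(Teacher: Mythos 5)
Your proposal is correct and follows essentially the same route as the paper's own proof: cancel the per-round denominators $S_h$ against the weights $a_h$, use sparsity to restrict to the important key set $\mathcal K_q$, and use fairness plus effectiveness to extract a uniform multiplicity (the paper's $N_q$, your $m$) that cancels between numerator and denominator. Your explicit handling of the intersection $\mathcal K_q \cap \mathcal K_{h_q}$ in the per-round denominators is a slightly more careful piece of bookkeeping than the paper's, but the argument is the same.
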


\begin{proof}[Proof of Theorem \ref{exactness}]
With our merging scheme (Equation \ref{merging_scheme}), the attention output is:
\begin{equation}
o_q' = \sum_{h=1}^{H}\sum_{k\in \mathcal K_{h_q} } \left( \frac{  \sum_{k' \in \mathcal K_{h_q}} e^{q \cdot k'}}{\sum_{n=1}^{H}\sum_{k' \in \mathcal K_{n_q}} e^{q \cdot k'}} \cdot \frac{e^{q \cdot k}}{\sum_{k' \in \mathcal K_{h_q}} e^{q \cdot k'}} \right) \cdot v_k = \sum_{h=1}^{H} \frac{\sum_{k\in \mathcal K_{h_q} } e^{q\cdot k}\cdot v_k }{\sum_{n=1}^{H}\sum_{k' \in \mathcal K_{n_q}} e^{q \cdot k'}}
\end{equation}

Under Assumption \ref{sparsity}, the dense attention output for this query is the vector:

$$
o_q = \sum_{k \in \mathcal K_q} \frac{e^{q \cdot k}}{\sum_{k' \in \mathcal K_q} e^{q \cdot k'}} \cdot v_k 
$$
where $K_q$ is the set of keys $k_i$ for query $q$ for which $w_i \neq 0$.

Under Assumption \ref{good_clustering}, all keys that have big inner product with a given query $q$ are clustered with that query, at least one time. Also, under Assumption \ref{lsh_fairness}, all these keys are clustered the same amount of times with each query. We will denote the amount of a query is clustered with each one of its' big inner products with $N_q$. It holds that:
\begin{equation}
\sum_{n=1}^{H}\sum_{k' \in \mathcal K_{n_q}} e^{q \cdot k'} = N_q \cdot \sum_{k' \in \mathcal K_q} e^{q\cdot k'}
\label{exact_denom}
\end{equation}

By substitution in Equation \ref{exact_denom}, we get:
\begin{equation}
o_q = \frac{\sum_{n=1}^{H} \sum_{k\in \mathcal K_{h_q}}e^{q\cdot k}\cdot v_q}{N_q \cdot \sum_{k' \in \mathcal K_q} e^{q\cdot k'}}
\label{before_exact}
\end{equation}

Under Assumptions \ref{sparsity}, \ref{lsh_fairness} small inner products get a zero-score and all big inner products are clustered $N_q$ times each. Thus, we can write for the nominator: $\sum_{n=1}^{H} \sum_{k\in \mathcal K_{h_q}}e^{q\cdot k}\cdot v_q = N_q \sum_{k \in \mathcal K_q} e^{q\cdot k'}$.

Substituting to Equation \ref{before_exact}, we get:
$$
o_q' = \sum_{k \in K_q}\frac{e^{q \cdot k}}{\sum_{k' \in \mathcal K_q} e^{q \cdot k'}} v_k = o_q
$$
\end{proof}

In this section, we explained in detail our merging scheme. We also showed that under certain assumptions on the data, this scheme leads to exact approximations of dense attention output. We fully understand that the assumptions are far too tight to hold in practice and since distortion is introduced. However, as we demonstrated in the Experiments section, the distortion is negligible even for large memory reductions, since SMYRF can perform on par (or even better, e.g. GLUE) with dense attention, especially on downstream Natural Language Processing tasks, using a fraction of the original memory.

\section{Complexity analysis and speedups}
In the paper, we presented shortly the complexity of our algorithm. In this section, we explain it in more detail and we also include speed plots that demonstrate the effectiveness of SMYRF for long sequences.

\subsection{Complexity Analysis}
For the complexity analysis, we assume for simplicity that $|\mathcal Q|=|\mathcal K| = N$, i.e. the number of available queries is equal to the number of available keys.

We run the algorithm $H$ times (i.e. rounds of LSH). Each run has two stages:
\begin{itemize}
    \item[--] Clustering in L clusters (of equal size). For clustering, we hash all points with LSH which requires complexity $O(N)$ and then we sort points based on their hash, which requires complexity $O( N\cdot \log N)$. Overall, the complexity is $O(N \cdot \log N)$.
    \item[--] Within clusters attention. Attention within each cluster has quadratic cost with respect to the cluster size. Each cluster has size $\frac{N}{L}$, so the complexity of attention in a single cluster is $O(\frac{N^2}{L^2})$. We have $L$ such clusters, and thus the overall complexity is $O(\frac{N^2}{L})$.
\end{itemize}

The total complexity is: $O\left(H \cdot N \cdot \log N + H \cdot \frac{N^2}{L}\right)$. We choose $L = O(N)$, i.e. each query attends to a small constant number of keys. We obtain complexity: $O(H \cdot N \cdot \log N)$. 

\subsection{Speedups}
In this subsection, we present two speed plots to demonstrate the speed effectiveness of SMYRF for large sequences. The first plot, Figure \ref{fixedmemory}, shows elapsed time for SMYRF-BERT (base) GPU inference for various batch-sequence length configurations. In all these experiments $\textrm{batch size} \times N = 65$K, where $N$ denotes the sequence length. We underline that SMYRF has (almost) constant speed in all these configurations while the speed of dense attention decreases rapidly us the sequence length increases. Notably, SMYRF is already faster than dense attention in sequence length $1024$ tokens. The second plot, Figure \ref{fixedbatch}, shows seconds per iteration for SMYRF-BERT (base) GPU inference for various hashes-cluster configurations. In all these experiments, batch size is fixed to $1$. As shown, all different configurations significantly outperform (in terms of speed) dense attention as the sequence length increases.

\begin{figure}[!htp]
    \centering
    \includegraphics[width=\textwidth]{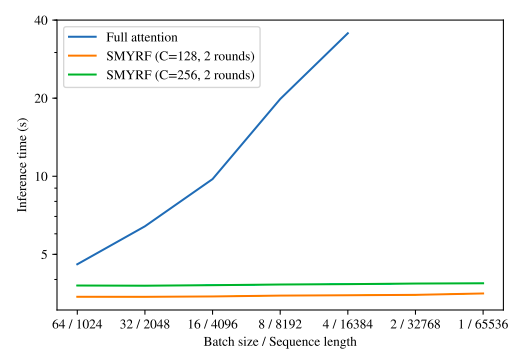}
    \caption{Elapsed time for SMYRF-BERT (base) GPU inference for various batch-sequence length configurations. Elapsed time for SMYRF is almost constant for all configurations. Elapsed time for dense attention worsens a lot as we increase the sequence length.}
    \label{fixedmemory}
\end{figure}

\begin{figure}[!htp]
    \centering
    \includegraphics[width=\textwidth]{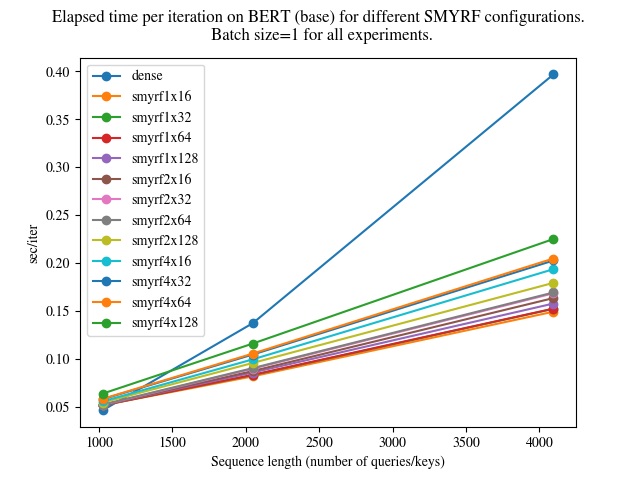}
    \caption{Seconds/iteration for SMYRF-BERT (base) GPU inference for various hashes-cluster configurations. In all experiments batch size is fixed to $1$. SMYRF has approximate the same speed with dense attention at 1024 tokens. However, as the number of tokens increases, SMYRF is significantly faster than dense attention.}
    \label{fixedbatch}
\end{figure}

\section{Experimental details}
\subsection{Natural Language Processing experiments}
In this section, we provide some details about the experimental settings for the Natural Language Processing experiments. 
\subsubsection{IMDB}
IMDB~\cite{IMDB} contains 25,000 train and 25,000 dev labeled movie reviews. The task is to identify if a given movie review is positive or negative. The average sentence length in IMDB is 300 tokens and the 95th percentile of context length is 705 tokens. In our experiments, we truncated/padded all sentences to $512$ tokens. For all our experiments, we trained for 3 epochs, with batch size 8. We used Adam~\cite{adam} as our optimizer with learning rate $3\cdot 10^{-5}$. The dataset is available publicly in this link: \href{https://ai.stanford.edu/~amaas/data/sentiment/}{https://ai.stanford.edu/~amaas/data/sentiment/}. The experiments on IMDB run on a single GPU provided by Google Colab.

\subsubsection{GLUE}
GLUE~\cite{GLUE} is a standard multitask benchmark for Natural Language Processing. For a full description of tasks, dataset statistics and files, please refer to the official website: \href{https://gluebenchmark.com/}{https://gluebenchmark.com/}. Following previous literature (e.g.~\cite{liu2019roberta, devlin2018bert, yang2019xlnet, clark2020electra}), for our GLUE experiments we truncate/pad all input sentences to $128$ tokens. For GLUE, we trained for 3 epochs at batch size 16, warming up for 10\% of the total training time.  The learning rate was selected via grid search among the values $\{5\cdot 10^{-5}, 3\cdot 10^{-5}, 2\cdot 10^{-5}\}$. We run the GLUE experiments on TPUs.

\subsection{Training SMYRF-BigGAN on Celeba-HQ}
In the paper we presented results for training SMYRF-BigGAN from scratch on Celeba-HQ~\cite{celeba}. As explained, we trained on Celeba-HQ (and not ImageNet~\cite{ImageNet}) in order to save computational resources. In this section, we provide the details for these experiments. First of all, as the name suggests, we used as the underlying model, BigGAN~\cite{biggan}. For our experiments we disabled BigGAN's hierarchical latent codes, shared embeddings and skip-z connections since Celeba-HQ has one single class (humans) and these architectural choices were introduced to model multiple classes (e.g. 1000 classes on ImageNet). We also found that for the single-class Celeba-HQ we didn't have to use very large batch sizes for stable training. For all our experiments, we used batch size $32$. Following the BigGAN paper, we used Two Time Scale Update Rule (TTUR)~\cite{FID} with Adam~\cite{adam} optimizer, G$_{\textrm{lr}}=2\cdot 10^{-4}$, D$_{\textrm{lr}}=5\cdot 10^{-5}$, $\beta_1=0$ and $\beta_2=0.999$.

BigGAN for resolutions $\{128\times 128, 256\times 256\}$, is trained with a single attention layer at resolution $64\times 64$. The authors mention that they stick attention to low resolution to save computational resources. We take advantage of SMYRF's reduced memory requirements to train with attention at resolution $128\times 128$ and $256\times 256$. For both experiments, we remove the dense attention layer and we add a SMYRF attention layer. Since our goal is to demonstrate the ability of SMYRF layers to train successfully from scratch, there is no reason to use higher (image) resolutions than the attention resolution and thus SMYRF is the final layer (before Tanh~\cite{tanh}) in the architecture. In other words, we train on image resolutions $128\times 128, 256\times 256$ respectively. Training on resolution $128\times 128$ has the side-benefit that we can compare directly with the original BigGAN model (with dense attention at $64\times 64$). As we demonstrated in the Experiments section of the paper, moving attention from $64\times 64$ to $128\times 128$ can lead to $\approx 4\%$ FID~\cite{FID} improvement after $120$K training steps\footnote{We note that in order to save computational resources we stopped training for both models on 120K iterations, before mode-collapse. That means that further training could possibly lead to even better FID scores for both models.}. We present random generated images from SMYRF-BigGAN with attention at resolution $256\times 256$ at Figure~\ref{gen256}. As explained in the Things that did not work section, training with SMYRF from scratch is harder as the sequence length increases. The main reason is that during the early stages of training attention maps are not sparse and thus our approximation's algorithm output is not close to the dense attention output. We noticed that the overall performance of SMYRF-BigGAN-256 is lower compared to SMYRF-BigGAN-128 and the generated images seem slightly less realistic. Despite the aforementioned shortcomings, this experiment demonstrated that it is possible to successfully train an attention GAN with attention at $256\times 256$ resolution on a \textit{single} TPUv3-8 device. The training at $128\times 128$ resolution lasts approximately 1.5 day and at $256\times 256$ resolution approximately 2 days.
\begin{figure}
    \centering
    \includegraphics[width=0.5\textwidth]{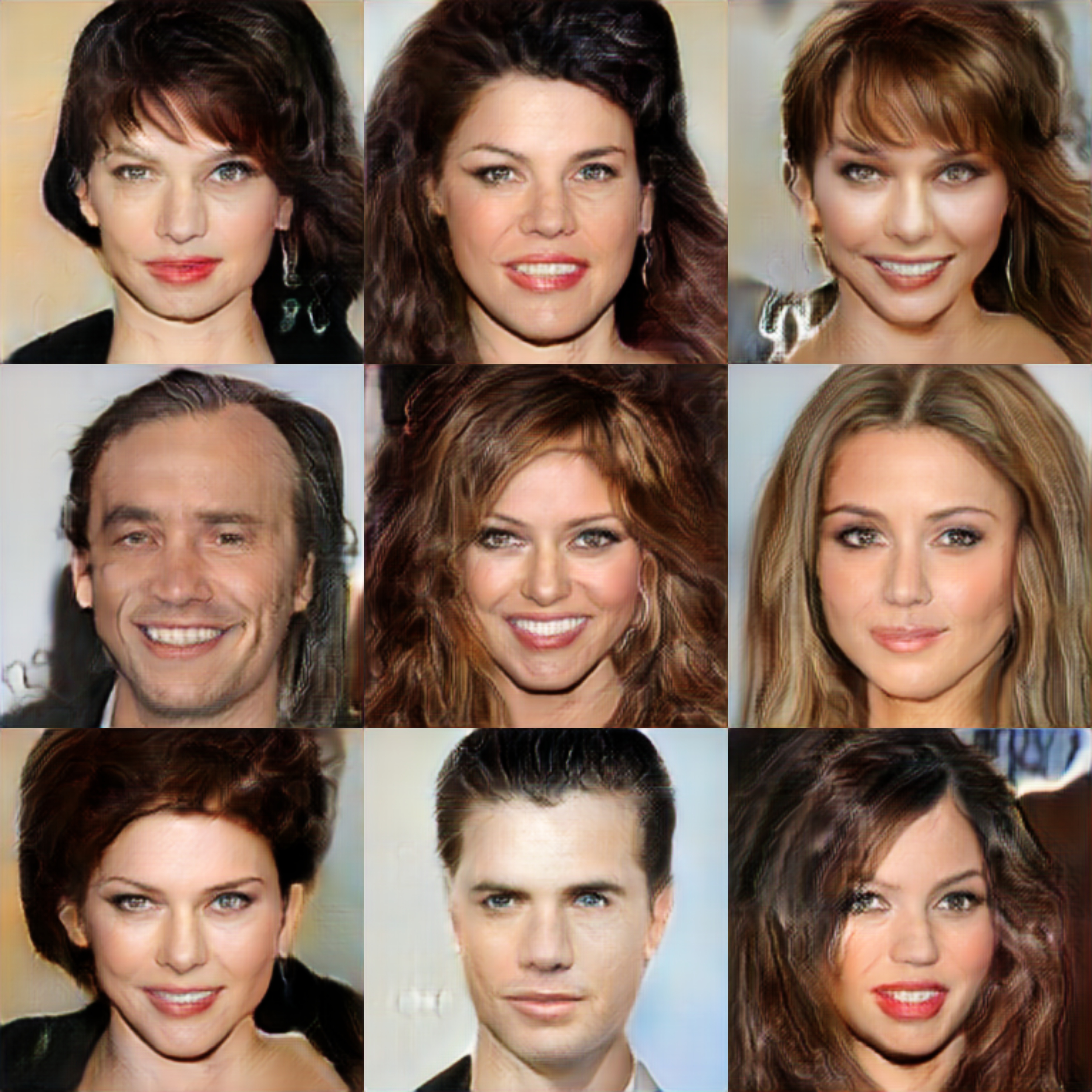}
    \caption{Generated images from SMYRF-BigGAN on Celeba-HQ-256.  Attention at $256\times256$. The trained model uses $50\%$ less memory compared to the memory dense attention would use.}
    \label{gen256}
\end{figure}

\section{Things that did not work}
In this section, we discuss some negative results we encountered in the process of writing this paper. Our goal is to share our experience with the research community about the observed shortcomings of some approaches so that future research can re-formulate them, reject them or even contradict our findings. We also include some suggestions on potential ways to alleviate such problems that we did not have the time to explore in this paper.

\subsection{Learning from scratch under extreme sparsity}
Our initial goal was to train a SMYRF model from scratch with extreme memory reductions, e.g. to the magnitude of $99\%$. Such reduction could enable the training of SMYRF-BigGAN with attention at $1024\times1024$. However, our preliminary experiments with BigGAN~\cite{biggan}, failed (mode-collapse very early in the training process). We tried to investigate this further and we found that during the early stages of the training the Frobenius norm of the difference between the SMYRF and the dense attention map is really high. We believe that this is due to the non-sparsity of the attention maps in the early stages of the training. It is also possible that their eigenvalues decay slower which means that their effective rank is higher compared to pre-trained models. One way to solve the problem is to dynamically adapt the memory reduction (e.g. by selecting the number of hashes) during the training. One way to achieve that is to use as many hashes as need to achieve a certain bound for the Frobenius norm. In the early stages of training, we expect that more hashes are needed for an accurate reconstruction. The number of hashes should decay as the training progresses and the attention maps become more sparse and have lower rank. One disadvantage of this approach is that at the early stages of the training, more memory is needed. However, we observed that the period of time in which the attention maps are not very sparse is minor compared to the whole training time for BigGAN and thus this approach can lead to significant savings. We aim to explore this more in the future.

\subsection{Better LSH based clustering schemes}
The biggest advantage of clustering with an LSH-based scheme is that the attention complexity is linear (compared to K-means clustering for example, see Routing Transformer~\cite{routing_transformer}). However, while inspecting SMYRF, we found that LSH-clustering is the biggest bottleneck to greater performances. For example, if each query attends to at its' top-k (in terms of inner product) keys (instead of the keys assigned with LSH), the performance improves considerably. Finding exactly the top-k keys for each query is expensive (especially in high dimensions) and thus this approach is not viable. However, this observation motivates research in finding even more effective LSH-based clustering schemes. Even though we tried other ALSH variants, we did not manage to find something that works better than our proposed transformations till now. We consider this problem an interesting future direction since ALSH has been widely explored only for the case of a single query and multiple keys. In this paper, we did the first step in extending this to multiple queries, but we are inclined to believe that further research can lead to even better results in this direction.

\bibliographystyle{unsrt}
\bibliography{SMYRF}

\begin{thebibliography}{10}

\bibitem{biggan}
Andrew Brock, Jeff Donahue, and Karen Simonyan.
\newblock Large scale {GAN} training for high fidelity natural image synthesis.
\newblock In {\em International Conference on Learning Representations}, 2019.

\bibitem{sagan}
Han Zhang, Ian Goodfellow, Dimitris Metaxas, and Augustus Odena.
\newblock Self-attention generative adversarial networks.
\newblock In {\em International Conference on Machine Learning}, pages
  7354--7363. PMLR, 2019.

\bibitem{daras2019local}
Giannis Daras, Augustus Odena, Han Zhang, and Alexandros~G Dimakis.
\newblock Your local gan: Designing two dimensional local attention mechanisms
  for generative models.
\newblock In {\em Proceedings of the IEEE/CVF Conference on Computer Vision and
  Pattern Recognition}, pages 14531--14539, 2020.

\bibitem{albert}
Zhenzhong Lan, Mingda Chen, Sebastian Goodman, Kevin Gimpel, Piyush Sharma, and
  Radu Soricut.
\newblock Albert: A lite bert for self-supervised learning of language
  representations.
\newblock In {\em International Conference on Learning Representations}, 2020.

\bibitem{yang2019xlnet}
Zhilin Yang, Zihang Dai, Yiming Yang, Jaime Carbonell, Russ~R Salakhutdinov,
  and Quoc~V Le.
\newblock Xlnet: Generalized autoregressive pretraining for language
  understanding.
\newblock In {\em Advances in neural information processing systems}, pages
  5753--5763, 2019.

\bibitem{devlin2018bert}
Jacob Devlin, Ming-Wei Chang, Kenton Lee, and Kristina Toutanova.
\newblock {BERT}: Pre-training of deep bidirectional transformers for language
  understanding.
\newblock In {\em Proceedings of the 2019 Conference of the North {A}merican
  Chapter of the Association for Computational Linguistics: Human Language
  Technologies, Volume 1 (Long and Short Papers)}, pages 4171--4186,
  Minneapolis, Minnesota, June 2019. Association for Computational Linguistics.

\bibitem{dai2019transformer}
Zihang Dai, Zhilin Yang, Yiming Yang, Jaime Carbonell, Quoc Le, and Ruslan
  Salakhutdinov.
\newblock Transformer-{XL}: Attentive language models beyond a fixed-length
  context.
\newblock In {\em Proceedings of the 57th Annual Meeting of the Association for
  Computational Linguistics}, pages 2978--2988, Florence, Italy, July 2019.
  Association for Computational Linguistics.

\bibitem{t5}
Colin Raffel, Noam Shazeer, Adam Roberts, Katherine Lee, Sharan Narang, Michael
  Matena, Yanqi Zhou, Wei Li, and Peter~J Liu.
\newblock Exploring the limits of transfer learning with a unified text-to-text
  transformer.
\newblock {\em Journal of Machine Learning Research}, 21(140):1--67, 2020.

\bibitem{liu2019roberta}
Yinhan Liu, Myle Ott, Naman Goyal, Jingfei Du, Mandar Joshi, Danqi Chen, Omer
  Levy, Mike Lewis, Luke Zettlemoyer, and Veselin Stoyanov.
\newblock Roberta: A robustly optimized bert pretraining approach, 2019.

\bibitem{MATHTRANSFORMER}
Guillaume Lample and François Charton.
\newblock Deep learning for symbolic mathematics.
\newblock In {\em International Conference on Learning Representations}, 2020.

\bibitem{musictransformer}
Cheng-Zhi~Anna Huang, Ashish Vaswani, Jakob Uszkoreit, Ian Simon, Curtis
  Hawthorne, Noam Shazeer, Andrew~M. Dai, Matthew~D. Hoffman, Monica
  Dinculescu, and Douglas Eck.
\newblock Music transformer.
\newblock In {\em International Conference on Learning Representations}, 2019.

\bibitem{child2019generating}
Rewon Child, Scott Gray, Alec Radford, and Ilya Sutskever.
\newblock Generating long sequences with sparse transformers, 2019.

\bibitem{GPT3}
Tom~B. Brown, Benjamin Mann, Nick Ryder, Melanie Subbiah, Jared Kaplan,
  Prafulla Dhariwal, Arvind Neelakantan, Pranav Shyam, Girish Sastry, Amanda
  Askell, Sandhini Agarwal, Ariel Herbert-Voss, Gretchen Krueger, Tom Henighan,
  Rewon Child, Aditya Ramesh, Daniel~M. Ziegler, Jeffrey Wu, Clemens Winter,
  Christopher Hesse, Mark Chen, Eric Sigler, Mateusz Litwin, Scott Gray,
  Benjamin Chess, Jack Clark, Christopher Berner, Sam McCandlish, Alec Radford,
  Ilya Sutskever, and Dario Amodei.
\newblock Language models are few-shot learners, 2020.

\bibitem{show_attend_and_tell}
Kelvin Xu, Jimmy Ba, Ryan Kiros, Kyunghyun Cho, Aaron Courville, Ruslan
  Salakhudinov, Rich Zemel, and Yoshua Bengio.
\newblock Show, attend and tell: Neural image caption generation with visual
  attention.
\newblock In {\em International conference on machine learning}, pages
  2048--2057, 2015.

\bibitem{Voita_2019}
Elena Voita, David Talbot, Fedor Moiseev, Rico Sennrich, and Ivan Titov.
\newblock Analyzing multi-head self-attention: Specialized heads do the heavy
  lifting, the rest can be pruned.
\newblock {\em Proceedings of the 57th Annual Meeting of the Association for
  Computational Linguistics}, 2019.

\bibitem{michel2019sixteen}
Paul Michel, Omer Levy, and Graham Neubig.
\newblock Are sixteen heads really better than one?
\newblock In {\em Advances in Neural Information Processing Systems 32}, pages
  14014--14024. Curran Associates, Inc., 2019.

\bibitem{adaptively_sparse_transformers}
Gonçalo~M. Correia, Vlad Niculae, and André F.~T. Martins.
\newblock Adaptively sparse transformers.
\newblock {\em Proceedings of the 2019 Conference on Empirical Methods in
  Natural Language Processing and the 9th International Joint Conference on
  Natural Language Processing (EMNLP-IJCNLP)}, 2019.

\bibitem{reformer}
Nikita Kitaev, Lukasz Kaiser, and Anselm Levskaya.
\newblock Reformer: The efficient transformer.
\newblock In {\em ICLR}, 2020.

\bibitem{routing_transformer}
Aurko Roy, Mohammad Saffar, Ashish Vaswani, and David Grangier.
\newblock Efficient content-based sparse attention with routing transformers,
  2020.

\bibitem{sinkhorn}
Yi~Tay, Dara Bahri, Liu Yang, Donald Metzler, and Da-Cheng Juan.
\newblock Sparse sinkhorn attention, 2020.

\bibitem{lample2019large}
Guillaume Lample, Alexandre Sablayrolles, Marc'Aurelio Ranzato, Ludovic
  Denoyer, and Herv{\'e} J{\'e}gou.
\newblock Large memory layers with product keys.
\newblock In {\em Advances in Neural Information Processing Systems}, pages
  8548--8559, 2019.

\bibitem{star_transformer}
Qipeng Guo, Xipeng Qiu, Pengfei Liu, Yunfan Shao, Xiangyang Xue, and Zheng
  Zhang.
\newblock Star-transformer.
\newblock {\em Proceedings of the 2019 Conference of the North}, 2019.

\bibitem{localattn}
Thang Luong, Hieu Pham, and Christopher~D. Manning.
\newblock Effective approaches to attention-based neural machine translation.
\newblock {\em Proceedings of the 2015 Conference on Empirical Methods in
  Natural Language Processing}, 2015.

\bibitem{clark2020electra}
Kevin Clark, Minh-Thang Luong, Quoc~V. Le, and Christopher~D. Manning.
\newblock Electra: Pre-training text encoders as discriminators rather than
  generators, 2020.

\bibitem{GLUE}
Alex Wang, Amanpreet Singh, Julian Michael, Felix Hill, Omer Levy, and Samuel
  Bowman.
\newblock Glue: A multi-task benchmark and analysis platform for natural
  language understanding.
\newblock {\em Proceedings of the 2018 EMNLP Workshop BlackboxNLP: Analyzing
  and Interpreting Neural Networks for NLP}, 2018.

\bibitem{superglue}
Alex Wang, Yada Pruksachatkun, Nikita Nangia, Amanpreet Singh, Julian Michael,
  Felix Hill, Omer Levy, and Samuel Bowman.
\newblock Superglue: A stickier benchmark for general-purpose language
  understanding systems.
\newblock In {\em Advances in Neural Information Processing Systems 32}, pages
  3266--3280. Curran Associates, Inc., 2019.

\bibitem{Malaviya_2018}
Chaitanya Malaviya, Pedro Ferreira, and André F.~T. Martins.
\newblock Sparse and constrained attention for neural machine translation.
\newblock {\em Proceedings of the 56th Annual Meeting of the Association for
  Computational Linguistics (Volume 2: Short Papers)}, 2018.

\bibitem{beltagy2020longformer}
Iz~Beltagy, Matthew~E. Peters, and Arman Cohan.
\newblock Longformer: The long-document transformer, 2020.

\bibitem{celeba}
Ziwei Liu, Ping Luo, Xiaogang Wang, and Xiaoou Tang.
\newblock Deep learning face attributes in the wild.
\newblock {\em 2015 IEEE International Conference on Computer Vision (ICCV)},
  Dec 2015.

\bibitem{vaswani2017attention}
Ashish Vaswani, Noam Shazeer, Niki Parmar, Jakob Uszkoreit, Llion Jones,
  Aidan~N Gomez, {\L}ukasz Kaiser, and Illia Polosukhin.
\newblock Attention is all you need.
\newblock In {\em Advances in neural information processing systems}, pages
  5998--6008, 2017.

\bibitem{gary}
Michael~R Garey and David~S Johnson.
\newblock {\em Computers and intractability}, volume 174.
\newblock freeman San Francisco, 1979.

\bibitem{l2lsh}
Anshumali Shrivastava and Ping Li.
\newblock Asymmetric lsh (alsh) for sublinear time maximum inner product search
  (mips).
\newblock In {\em Advances in Neural Information Processing Systems}, pages
  2321--2329, 2014.

\bibitem{xbox}
Yoram Bachrach, Yehuda Finkelstein, Ran Gilad-Bachrach, Liran Katzir, Noam
  Koenigstein, Nir Nice, and Ulrich Paquet.
\newblock Speeding up the xbox recommender system using a euclidean
  transformation for inner-product spaces.
\newblock In {\em Proceedings of the 8th ACM Conference on Recommender
  systems}, pages 257--264, 2014.

\bibitem{h2lsh}
Qiang Huang, Guihong Ma, Jianlin Feng, Qiong Fang, and Anthony~KH Tung.
\newblock Accurate and fast asymmetric locality-sensitive hashing scheme for
  maximum inner product search.
\newblock In {\em Proceedings of the 24th ACM SIGKDD International Conference
  on Knowledge Discovery \& Data Mining}, pages 1561--1570, 2018.

\bibitem{e2lsh}
Mayur Datar, Nicole Immorlica, Piotr Indyk, and Vahab~S Mirrokni.
\newblock Locality-sensitive hashing scheme based on p-stable distributions.
\newblock In {\em Proceedings of the twentieth annual symposium on
  Computational geometry}, pages 253--262, 2004.

\bibitem{paszke2019pytorch}
Adam Paszke, Sam Gross, Francisco Massa, Adam Lerer, James Bradbury, Gregory
  Chanan, Trevor Killeen, Zeming Lin, Natalia Gimelshein, Luca Antiga, et~al.
\newblock Pytorch: An imperative style, high-performance deep learning library.
\newblock In {\em Advances in neural information processing systems}, pages
  8026--8037, 2019.

\bibitem{ImageNet}
Olga Russakovsky, Jia Deng, Hao Su, Jonathan Krause, Sanjeev Satheesh, Sean Ma,
  Zhiheng Huang, Andrej Karpathy, Aditya Khosla, Michael Bernstein, and et~al.
\newblock Imagenet large scale visual recognition challenge.
\newblock {\em International Journal of Computer Vision}, 115(3):211–252, Apr
  2015.

\bibitem{inception_score}
Tim Salimans, Ian Goodfellow, Wojciech Zaremba, Vicki Cheung, Alec Radford, and
  Xi~Chen.
\newblock Improved techniques for training gans.
\newblock In {\em Advances in neural information processing systems}, pages
  2234--2242, 2016.

\bibitem{FID}
Martin Heusel, Hubert Ramsauer, Thomas Unterthiner, Bernhard Nessler, and Sepp
  Hochreiter.
\newblock Gans trained by a two time-scale update rule converge to a local nash
  equilibrium.
\newblock In {\em Advances in neural information processing systems}, pages
  6626--6637, 2017.

\bibitem{sst}
Richard Socher, Alex Perelygin, Jean Wu, Jason Chuang, Christopher~D Manning,
  Andrew Ng, and Christopher Potts.
\newblock Recursive deep models for semantic compositionality over a sentiment
  treebank.
\newblock In {\em Proceedings of the 2013 conference on empirical methods in
  natural language processing}, pages 1631--1642, 2013.

\bibitem{stsb}
Daniel Cer, Mona Diab, Eneko Agirre, Inigo Lopez-Gazpio, and Lucia Specia.
\newblock Semeval-2017 task 1: Semantic textual similarity multilingual and
  crosslingual focused evaluation.
\newblock {\em Proceedings of the 11th International Workshop on Semantic
  Evaluation (SemEval-2017)}, 2017.

\bibitem{qqp}
Shankar Iyer, Nikhil Dandekar, and Kornel Csernai.
\newblock First quora dataset release: Question pairs, 2017.

\bibitem{wnli}
Hector Levesque, Ernest Davis, and Leora Morgenstern.
\newblock The winograd schema challenge.
\newblock In {\em Thirteenth International Conference on the Principles of
  Knowledge Representation and Reasoning}, 2012.

\bibitem{rte1}
Ido Dagan, Oren Glickman, and Bernardo Magnini.
\newblock The pascal recognising textual entailment challenge.
\newblock In {\em Machine Learning Challenges Workshop}, pages 177--190.
  Springer, 2005.

\bibitem{rte2}
Roy Bar-Haim, Ido Dagan, Bill Dolan, Lisa Ferro, Danilo Giampiccolo, Bernardo
  Magnini, and Idan Szpektor.
\newblock The second pascal recognising textual entailment challenge.
\newblock In {\em Proceedings of the second PASCAL challenges workshop on
  recognising textual entailment}, volume~6, pages 6--4. Venice, 2006.

\bibitem{rte3}
Danilo Giampiccolo, Bernardo Magnini, Ido Dagan, and Bill Dolan.
\newblock The third pascal recognizing textual entailment challenge.
\newblock In {\em Proceedings of the ACL-PASCAL workshop on textual entailment
  and paraphrasing}, pages 1--9. Association for Computational Linguistics,
  2007.

\bibitem{rte4}
Luisa Bentivogli, Peter Clark, Ido Dagan, and Danilo Giampiccolo.
\newblock The fifth pascal recognizing textual entailment challenge.
\newblock In {\em TAC}, 2009.

\bibitem{mrpc}
William~B Dolan and Chris Brockett.
\newblock Automatically constructing a corpus of sentential paraphrases.
\newblock In {\em Proceedings of the Third International Workshop on
  Paraphrasing (IWP2005)}, 2005.

\bibitem{cola}
Alex Warstadt, Amanpreet Singh, and Samuel~R Bowman.
\newblock Neural network acceptability judgments.
\newblock {\em Transactions of the Association for Computational Linguistics},
  7:625--641, 2019.

\bibitem{squad}
Pranav Rajpurkar, Jian Zhang, Konstantin Lopyrev, and Percy Liang.
\newblock Squad: 100,000+ questions for machine comprehension of text.
\newblock {\em Proceedings of the 2016 Conference on Empirical Methods in
  Natural Language Processing}, 2016.

\bibitem{N18-1101}
Adina Williams, Nikita Nangia, and Samuel Bowman.
\newblock A broad-coverage challenge corpus for sentence understanding through
  inference.
\newblock In {\em Proceedings of the 2018 Conference of the North American
  Chapter of the Association for Computational Linguistics: Human Language
  Technologies, Volume 1 (Long Papers)}, pages 1112--1122. Association for
  Computational Linguistics, 2018.

\bibitem{IMDB}
Andrew~L. Maas, Raymond~E. Daly, Peter~T. Pham, Dan Huang, Andrew~Y. Ng, and
  Christopher Potts.
\newblock Learning word vectors for sentiment analysis.
\newblock In {\em Proceedings of the 49th Annual Meeting of the Association for
  Computational Linguistics: Human Language Technologies}, pages 142--150,
  Portland, Oregon, USA, June 2011. Association for Computational Linguistics.

\bibitem{nearoptimallsh}
Alexandr Andoni and Piotr Indyk.
\newblock Near-optimal hashing algorithms for approximate nearest neighbor in
  high dimensions.
\newblock {\em Commun. ACM}, 51(1):117–122, January 2008.

\bibitem{andoni2015practical}
Alexandr Andoni, Piotr Indyk, Thijs Laarhoven, Ilya Razenshteyn, and Ludwig
  Schmidt.
\newblock Practical and optimal lsh for angular distance.
\newblock In {\em Advances in neural information processing systems}, pages
  1225--1233, 2015.

\bibitem{kiesel-etal-2019-semeval}
Johannes Kiesel, Maria Mestre, Rishabh Shukla, Emmanuel Vincent, Payam Adineh,
  David Corney, Benno Stein, and Martin Potthast.
\newblock {S}em{E}val-2019 task 4: Hyperpartisan news detection.
\newblock In {\em Proceedings of the 13th International Workshop on Semantic
  Evaluation}, pages 829--839, Minneapolis, Minnesota, USA, June 2019.
  Association for Computational Linguistics.

\bibitem{Sukhbaatar_2019}
Sainbayar Sukhbaatar, Edouard Grave, Piotr Bojanowski, and Armand Joulin.
\newblock Adaptive attention span in transformers.
\newblock {\em Proceedings of the 57th Annual Meeting of the Association for
  Computational Linguistics}, 2019.

\bibitem{Peters_2019}
Ben Peters, Vlad Niculae, and André F.~T. Martins.
\newblock Sparse sequence-to-sequence models.
\newblock {\em Proceedings of the 57th Annual Meeting of the Association for
  Computational Linguistics}, 2019.

\bibitem{biclustering_first_paper}
J.~A. Hartigan.
\newblock Direct clustering of a data matrix.
\newblock {\em Journal of the American Statistical Association}, 67:123--129,
  1972.

\bibitem{cheng2000biclustering}
Yizong Cheng and George~M Church.
\newblock Biclustering of expression data.
\newblock In {\em Ismb}, volume~8, pages 93--103, 2000.

\bibitem{hinton2015distilling}
Geoffrey Hinton, Oriol Vinyals, and Jeffrey Dean.
\newblock Distilling the knowledge in a neural network.
\newblock In {\em NIPS Deep Learning and Representation Learning Workshop},
  2015.

\bibitem{sanh2019distilbert}
Victor Sanh, Lysandre Debut, Julien Chaumond, and Thomas Wolf.
\newblock Distilbert, a distilled version of bert: smaller, faster, cheaper and
  lighter, 2019.

\bibitem{gomez2017reversible}
Aidan~N Gomez, Mengye Ren, Raquel Urtasun, and Roger~B Grosse.
\newblock The reversible residual network: Backpropagation without storing
  activations.
\newblock In {\em Advances in neural information processing systems}, pages
  2214--2224, 2017.

\bibitem{chen2016training}
Tianqi Chen, Bing Xu, Chiyuan Zhang, and Carlos Guestrin.
\newblock Training deep nets with sublinear memory cost, 2016.

\bibitem{hubara2016quantized}
Itay Hubara, Matthieu Courbariaux, Daniel Soudry, Ran El-Yaniv, and Yoshua
  Bengio.
\newblock Quantized neural networks: Training neural networks with low
  precision weights and activations.
\newblock {\em The Journal of Machine Learning Research}, 18(1):6869--6898,
  2017.

\bibitem{80236}
E.~D. {Karnin}.
\newblock A simple procedure for pruning back-propagation trained neural
  networks.
\newblock {\em IEEE Transactions on Neural Networks}, 1(2):239--242, 1990.

\bibitem{blalock2020state}
Davis Blalock, Jose Javier~Gonzalez Ortiz, Jonathan Frankle, and John Guttag.
\newblock What is the state of neural network pruning?, 2020.

\bibitem{strubell2019energy}
Emma Strubell, Ananya Ganesh, and Andrew McCallum.
\newblock Energy and policy considerations for deep learning in nlp, 2019.

\bibitem{deepfakes}
Pavel Korshunov and S{\'e}bastien Marcel.
\newblock Deepfakes: a new threat to face recognition? assessment and
  detection.
\newblock {\em arXiv preprint arXiv:1812.08685}, 2018.

\bibitem{generalizedmatching}
C.~Daskalakis, A.G. Dimakis, R.M. Karp, and M.J. Wainwright.
\newblock Probabilistic analysis of linear programming decoding.
\newblock {\em IEEE Transactions on Information Theory}, 54(8):3565–3578, Aug
  2008.

\bibitem{tensorflow}
Martin Abadi, Paul Barham, Jianmin Chen, Zhifeng Chen, Andy Davis, Jeffrey
  Dean, Matthieu Devin, Sanjay Ghemawat, Geoffrey Irving, Michael Isard,
  Manjunath Kudlur, Josh Levenberg, Rajat Monga, Sherry Moore, Derek~G. Murray,
  Benoit Steiner, Paul Tucker, Vijay Vasudevan, Pete Warden, Martin Wicke, Yuan
  Yu, and Xiaoqiang Zheng.
\newblock Tensorflow: A system for large-scale machine learning.
\newblock In {\em 12th USENIX Symposium on Operating Systems Design and
  Implementation (OSDI 16)}, pages 265--283, 2016.

\bibitem{wolf2019huggingfaces}
Thomas Wolf, Lysandre Debut, Victor Sanh, Julien Chaumond, Clement Delangue,
  Anthony Moi, Pierric Cistac, Tim Rault, Rémi Louf, Morgan Funtowicz, and
  Jamie Brew.
\newblock Huggingface's transformers: State-of-the-art natural language
  processing, 2019.

\bibitem{stylegan2}
Tero Karras, Samuli Laine, Miika Aittala, Janne Hellsten, Jaakko Lehtinen, and
  Timo Aila.
\newblock Analyzing and improving the image quality of stylegan.
\newblock In {\em Proceedings of the IEEE/CVF Conference on Computer Vision and
  Pattern Recognition}, pages 8110--8119, 2020.

\bibitem{stylegan1}
Tero Karras, Samuli Laine, and Timo Aila.
\newblock A style-based generator architecture for generative adversarial
  networks.
\newblock {\em 2019 IEEE/CVF Conference on Computer Vision and Pattern
  Recognition (CVPR)}, Jun 2019.

\bibitem{VoronoiLSH}
Alexandr Andoni, Piotr Indyk, Huy~L Nguyen, and Ilya Razenshteyn.
\newblock Beyond locality-sensitive hashing.
\newblock In {\em Proceedings of the twenty-fifth annual ACM-SIAM symposium on
  Discrete algorithms}, pages 1018--1028. SIAM, 2014.

\bibitem{qlsh}
Qiang Huang, Jianlin Feng, Yikai Zhang, Qiong Fang, and Wilfred Ng.
\newblock Query-aware locality-sensitive hashing for approximate nearest
  neighbor search.
\newblock {\em Proceedings of the VLDB Endowment}, 9(1):1--12, 2015.

\bibitem{adam}
Diederik~P. Kingma and Jimmy Ba.
\newblock Adam: {A} method for stochastic optimization.
\newblock In Yoshua Bengio and Yann LeCun, editors, {\em 3rd International
  Conference on Learning Representations, {ICLR} 2015, San Diego, CA, USA, May
  7-9, 2015, Conference Track Proceedings}, 2015.

\bibitem{tanh}
Chigozie Nwankpa, Winifred Ijomah, Anthony Gachagan, and Stephen Marshall.
\newblock Activation functions: Comparison of trends in practice and research
  for deep learning, 2018.

\end{thebibliography}
\end{document}